%
\documentclass[twoside,11pt]{article}
\usepackage{jmlr2e}

\hypersetup{
colorlinks=false,
hidelinks
}
\usepackage[utf8]{inputenc} 
\usepackage[T1]{fontenc}    
\usepackage{url}            
\usepackage{booktabs}       
\usepackage{amsfonts}       
\usepackage{nicefrac}       
\usepackage{microtype}      
\usepackage{xcolor}         
\usepackage{amssymb}
\usepackage{natbib}
\usepackage{color}
\usepackage{url}
\usepackage{amsmath}
\usepackage{wrapfig}
\usepackage{mathtools}
\usepackage{tikz}
\usepackage{algorithmic}
\usepackage{footmisc}
\usepackage{bibentry}
\nobibliography*

\usepackage{physics}
\usepackage{mathrsfs}
\mathtoolsset{showonlyrefs}
\usepackage{etoolbox}
\usepackage{makecell}
\usepackage[ruled]{algorithm2e}
\DontPrintSemicolon
\usepackage{enumerate}
\usepackage{lastpage}
\usepackage{float}
\usepackage{thmtools, thm-restate}
\usepackage[normalem]{ulem}

\newcommand{\explain}[1]{\tag*{(#1)}}

\newcommand{\fS}{\mathcal{S}}

\newcommand{\fY}{\mathcal{Y}}
\newcommand{\fB}{\mathcal{B}}
\newcommand{\fZ}{\mathcal{Z}}
\newcommand{\fW}{\mathcal{W}}
\newcommand{\fF}{\mathcal{F}}

\newcommand{\mi}{\texttt{i}}

\newcommand{\R}[1][]{\mathbb{R}^{#1}}
\newcommand{\C}[1][]{\mathbb{C}^{#1}}

\newcommand{\E}{\mathbb{E}}

\newcommand{\ns}{{|\fS|}}

\newcommand{\bop}{\mathcal{T}}

\newcommand{\indot}[2]{{\left<#1, #2\right>}}

\newcommand{\tref}[1]{\text{\ref{#1}}}

\newcommand{\Dnorm}[1]{\norm{#1}_D}
\newcommand{\innerdot}[2]{\left<{#1}, {#2}\right>}

\newtheorem{xtheorem}{Theorem}[section]
\newtheorem{xdefinition}{Definition}[section]
\newtheorem{xlemma}{Lemma}[section]

\newtheorem{xassumption}{Assumption}[section]

\allowdisplaybreaks
\allowbreak

\begin{document}

\title{Almost Sure Convergence of Linear Temporal Difference Learning with Arbitrary Features}

%

\author{\name Jiuqi Wang \email jiuqi@email.virginia.edu \\
\addr 
Department of Computer Science \\
University of Virginia\\
85 Engineer's Way, Charlottesville, VA, 22903
\AND
\name Shangtong Zhang \email shangtong@virginia.edu \\
\addr 
Department of Computer Science,  University of Virginia\\
University of Virginia \\
85 Engineer's Way, Charlottesville, VA, 22903 \\
}

\jmlrheading{27}{2026}{1-\pageref{LastPage}}{9/24; Revised 12/25}{3/26}{24-1538}{Jiuqi Wang, Shangtong Zhang}
\ShortHeadings{Convergence of Linear TD with Arbitrary Features}{Wang and Zhang}

\firstpageno{1}

\editor{Laurent Orseau}

\maketitle

\begin{abstract}
  Temporal difference (TD) learning with linear function approximation (linear TD) is a classic and powerful prediction algorithm in reinforcement learning. 
  While it is well-understood that linear TD converges almost surely to a unique point, this convergence traditionally requires the assumption that the features used by the approximator are linearly independent.  
  However, this linear independence assumption does not hold in many practical scenarios. This work is the first to establish the almost sure convergence of linear TD without requiring linearly independent features. 
  We prove that the weight iterates of linear TD converge to a bounded set, and that the value estimates derived from the weights in that set are the same almost everywhere. 
  We also establish a notion of local stability of the weight iterates. 
  Importantly, we do not impose assumptions tailored to feature dependence and do not modify the linear TD algorithm. 
  Key to our analysis is a novel characterization of bounded invariant sets of the mean ODE of linear TD. 
\end{abstract}

\begin{keywords}
  temporal difference learning,
  linear function approximation,
  reinforcement learning,
  almost sure convergence,
  bounded invariant sets
\end{keywords}


\section{Introduction}
Function approximation is crucial in reinforcement learning (RL) algorithms when the problem involves an intractable discrete or continuous state space~\citep{sutton2018reinforcement}. 
The idea is to encode the states into finite-dimensional real-valued vectors called features. 
A parameterized function of the features, with learnable weights, is then used to approximate the desired function. 
For instance, linear function approximation computes the approximated value by taking the dot product of the feature and the weight. 
Most existing convergence results with linear function approximation assume the features are linearly independent~\citep{tsitsiklis1997analysis,konda2000actor,sutton2008gtd,sutton2009fast, maei2011gradient, yu2015convergence, sutton2016emphatic, lee2019target, nachum2019dualdice, zou2019finite, carvalho2020new, zhang2020gradientdice, zhang2020provably, zhang2021average, zhang2021breaking,zhang2022truncated, zhang2023convergence, qian2025revisiting}.  
However, the linear independence assumption is not desired for at least four reasons. 
First, many well-known empirical successes of RL with linear function approximation~\citep{liang2016shallow, azagirre2024lyft} do not have linearly independent features, leaving a gap between theory and practice. 
Second, in the continual learning setting~\citep{ring1994continual, khetarpal2022towards,abel2023definition}, the observations received by an agent are usually served one after another. 
There is no way to verify whether the features used in the observations are linearly independent. 
Third, the features are sometimes constructed via neural networks \citep{chung2018two}. 
Usually, it is impossible to guarantee that those neural network-based features are linearly independent. 
Fourth, sometimes the features are gradients of another neural network, e.g., in the compatible feature framework for actor-critic algorithms (see \citet{sutton2000policy,konda2000actor,zhang2020provably} for details). 
One cannot guarantee those features are linearly independent either. 
As a result, although the linear independence assumption greatly simplifies the theoretical analysis, it is unrealistically restrictive. 
\citet{dayan1992tdlambda} and~\citet{tsitsiklis1997analysis,tsitsiklis1999average} also identify the removal of the linear independence assumption of the features as a future research direction.

This work makes progress towards closing this gap using linear temporal difference (TD) learning~\citep{sutton1988learning} as an example, since linear TD is arguably one of the most fundamental RL algorithms. 
In particular, this work is the first to establish the almost sure convergence of linear TD without requiring linearly independent features. 
The main contributions of this work are
\begin{enumerate}
  \item characterization of TD fixed points under arbitrary features and proof that all such fixed points produce the same value estimate almost everywhere;
  \item mean ODE analysis with Jordan normal form demonstrating every ODE trajectory converges to an initial-condition-dependent point;
  \item characterization of bounded invariant sets of the mean ODE in the absence of global asymptotic stability;
  \item establishment of a notion of local stability of the weight iterates.
\end{enumerate}
Importantly, our proof does not introduce assumptions tailored to feature dependence and keep linear TD in its original form. 
Building upon our work,~\citet{xie2025finite} conduct finite sample analysis of linear TD under arbitrary features to characterize its convergence rate, further completing the understanding of linear TD in its canonical form.

\section{Background}
\label{section: background}
\paragraph*{Notations.} A square complex matrix (not necessarily symmetric) $M \in \C[n\times n]$ is said to be positive definite if, for any non-zero vector $x \in \C[n]$, it holds that $\Re(x^H M x) > 0$, where $x^H$ denotes the conjugate transpose of $x$ and $\Re(\cdot)$ denotes the real part.
A matrix $M$ is negative definite if $-M$ is positive definite.
Likewise, a matrix $M$ is positive semi-definite if $\Re(x^H M x) \geq 0$ for any non-zero $x \in \C[n]$. 
A matrix $M$ is negative semi-definite if $-M$ is positive semi-definite.
Given a vector $x \in \C[n]$, 
we define the $\ell_2$ norm $\norm{x} \doteq \sqrt{x^H x}$.
The $\ell_2$ norm $\norm{\cdot}$ also induces a matrix norm.
Given a matrix $M \in \C[m \times n]$,
the induced matrix norm is defined as
$\norm{M} \doteq \sup_{x \in \C[n], x \neq 0} \frac{\norm{Mx}}{\norm{x}}$.
We now restrict ourselves to real vectors and matrices.
A real symmetric positive definite matrix $D \in \R[n]$ induces a vector norm $\Dnorm{\cdot}$, where $\Dnorm{x} \doteq \sqrt{x^\top D x}$ for $x \in \R[n]$.
We overload $\norm{\cdot}_D$ to also denote the induced matrix norm.

We consider a Markov Reward Process (MRP\footnote{In policy evaluation, the policy $\pi$ is fixed and induces an MRP from a Markov Decision Process (MDP).},~\citet{bellman1957markovian,puterman2014markov}) with a state space $\fS \subseteq \R[K]$ for some $K \in \mathbb{N}$.
The MRP employs an initial distribution $p_0: \fB(\fS) \to [0, 1]$, where $\fB$ denotes the Borel algebra.
The dynamics of the MRP are characterized by a transition kernel $p: \fB(\fS) \times \fS  \to [0, 1]$\footnote{We define the kernel this way to be compatible with the conventional notation $p(\cdot | s)$}.
The MRP also adopts a measurable and bounded reward function $r: \fS \to \R$. 
At time step $t$, suppose the agent is at state $S_t$.
It transitions to the next state $S_{t+1} \sim p(\cdot | S_t)$ following the transition kernel.
At the same time, the environment emits a reward $R_{t+1} \doteq r(S_t)$ to the agent for transitioning out of $S_t$.
Note that $(p, p_0)$ defines a Markov chain $\qty{S_t}$.
We further define a $p$-induced operator $P_\pi: \fF \to \fF$, 
where $\fF \doteq \qty{f: \fS \to \R[n] \mid f \text{ is measurable and integrable}}$, as
$(P_\pi f)(s) \doteq \int_\fS f(s') p (\dd{s'} \mid s)$.

The state-value function $v_\pi: \fS \to \R$ is defined as
$v_\pi(s) \doteq \E\qty[\sum_{t=0}^\infty \gamma^t R_{t+1} | S_0 = s]$, 
where $\gamma \in [0,1)$ is a discount factor.
The state-value function maps each state to the expected cumulative reward the agent gets starting from that state.
We consider a linear function approximation of $v_\pi$ with a feature mapping $x: \fS \to \R[d]$ and a weight vector $w \in \R[d]$.
The function $x$ maps each state $s \in \fS$ to a $d$-dimensional real vector $x(s)$.
One simply takes $x(s)^\top w$ to compute the approximated state value for $s$.
The goal is thus to adjust $w$ such that $x(s)^\top w \approx v_\pi(s)$ for all $s\in\fS$.
Linear TD updates the weight $w$ iteratively as
\begin{align}
  \label{eqn: linear td update}
  \tag{Linear TD}
  w_{t+1} = w_t + 
  \alpha_t\qty(R_{t+1} + \gamma x(S_{t+1})^\top w_t - x(S_t)^\top w_t) x(S_t).
\end{align}
where $\alpha_t$ is the learning rate and we recall that $\qty{S_t}$ is a Markov chain evolving as $S_{t+1} \sim p(\cdot |S_t)$ and $S_0$ is sampled from an arbitrary initial distribution.
The following assumptions are commonly made in analyzing linear TD~\citep{tsitsiklis1997analysis}.
\begin{xassumption}
  \label{assumption: learning rate}
  The learning rates $\qty{\alpha_t}$ is a decreasing sequence of positive real numbers such that $\sum_{t=0}^\infty \alpha_t = \infty$, $\sum_{t=0}^\infty \alpha_t^2 < \infty$, and $\lim_{t\to\infty} \qty(\frac{1}{\alpha_{t+1}} - \frac{1}{\alpha_t}) < \infty$.
\end{xassumption}

\begin{xassumption}
  \label{assumption: ergodicity}
  The Markov chain $\qty{S_t}$ admits a well-defined and unique stationary distribution $\mu: \fB(\fS) \to [0, 1]$, such that $\mu(Z) = \int_{\fS} p(Z \mid s)\mu(\dd{s})$ for all $Z \in \fB(\fS)$ and $\mu(U) > 0$ for all non-empty open sets $U \subseteq \fS$.
\end{xassumption}
The sequence $\qty{\frac{1}{(t+1)^k}}$ where $k \in (0.5, 1]$ satisfies Assumption~\ref{assumption: learning rate} as a valid example.
In light of Assumption~\ref{assumption: ergodicity}, we define an inner product $\indot{\cdot}{\cdot}_\mu$ induced by $\mu$ as
$\indot{f}{g}_\mu \doteq \int_\fS \indot{f(s)}{g(s)} \mu(\dd{s})$,
where $f:\fS \to \R[n]$ and $g:\fS \to \R[n]$ are measurable functions.
The inner product $\indot{\cdot}{\cdot}_\mu$ further induces a semi-norm $\norm{f}_\mu^2 \doteq \indot{f}{f}_\mu$.
\begin{xassumption}
  \label{assumption: feature function bounded}
  The feature function and reward function are bounded, i.e., \\$\sup_{s\in\fS} \norm{x(s)} < \infty, \sup_{s\in\fS} \abs{r(s)} < \infty$.
\end{xassumption}
We use $L_p(\fS, \mu)$ to denote the set of functions defined on $\fS$ that are $p$-integrable with respect to $\mu$.
A direct consequence of Assumption~\ref{assumption: feature function bounded} is that $x \in L_p(\fS, \mu)$ for $p \in [1,\infty)$ because
$\qty(\int_\fS \norm{x(s)}^p \mu(\dd{s}))^{1/p} 
\le \qty(\qty(\sup_{s\in\fS} \norm{x(s)})^p \int_\fS \mu(\dd{s}))^{1/p}
= \sup_{s\in\fS} \norm{x(s)} < \infty$.
\begin{xassumption}
  \label{assumption: transition kernel map zero to zero}
  Given any $f: \fS \to \R[n]$ and $f = 0$ a.e. with respect to $\mu$, it holds that $P_\pi f = 0$ a.e. with respect to $\mu$ as well.
\end{xassumption}
In the rest of the paper, we use ``a.e.'' to denote ``a.e. with respect to $\mu$'' for simplifying notations. 
Notably, Assumptions~\ref{assumption: ergodicity} -~\ref{assumption: transition kernel map zero to zero} trivially hold when $\fS$ is finite and $\qty{S_t}$ is irreducible.

We can represent the feature mapping $x$ as 
$x(s) = \mqty[x_1(s) & x_2(s) & \cdots & x_d(s)]^\top$, where each $x_i: \fS \to \R$ is a basis function 
for $i = 1,2,\dots,d$.
Since it is defined on a general state space, we define the linear dependence/independence of the features in the a.e. sense.
We say the set of basis functions $\qty{x_1, x_2, \dots, x_d}$ are linearly dependent a.e. if there exists $c \in \R[d]\setminus\qty{0}$, such that $c_1 x_1 + c_2 x_2 + \dots + c_d x_d = 0$ a.e.
Likewise, we say the set of basis functions $\qty{x_1, x_2, \dots, x_d}$ are linearly independent a.e. if $c_1 x_1 + c_2 x_2 + \dots + c_d x_d = 0$ a.e. holds if and only if $c = 0$.
In the rest of the text, we omit a.e. from linear dependence/independence whenever it does not confuse.

Intuitively, when $\qty{S_t}$ reaches the stationary distribution and the learning rate $\alpha$ is ``small'', linear TD behaves like a deterministic algorithm as
$w_{t+1} = w_t + \alpha_t\qty(Aw_t + b)$,
where 
\begin{align}
  A &\doteq \textstyle\E_{S \sim \mu, S' \sim p(\cdot \mid S)}
  \qty[x(S) \qty(\gamma x(S')^\top - x(S)^\top)]\\
  &\textstyle= \int_\fS 
  x(s)\qty(\gamma (P_\pi x)(s)^\top - x(s)^\top) \mu(\dd{s})\in \R[d \times d] \label{eq: A definition} \\
  b & \doteq \E_{S \sim \mu}\qty[x(S)r(S)]
  = \textstyle\int_{\fS} x(s)r(s)\mu(\dd{s}) \label{eq: b definition}
  \in \R[d].
\end{align}
We can then further relate the stochastic and discrete iterative updates~\eqref{eqn: linear td update} with deterministic and continuous trajectories of the following ordinary differential equation (ODE)
\begin{align}
  \label{eq linear td ode}
  \textstyle \dv{w(t)}{t} = Aw(t) + b,
\end{align}
which is known as the ODE method in stochastic approximation \citep{benveniste1990MP,kushner2003stochastic,borkar2009stochastic,borkar2025ode,liu2025ode}.

In the finite case, it is also well known that $P_\pi$ is nonexpansive and $A$ is negative semi-definite~\citep{tsitsiklis1997analysis} (if $\qty{x_i}$ are linearly independent, then $A$ would be negative definite).
These are also true in the continuous state space we consider.
\begin{lemma}
  \label{lemma: pv norm}
  Let Assumption~\ref{assumption: ergodicity} hold. For any $v \in L_2(\fS, \mu)$, it holds that $\norm{P_\pi v}_\mu \le \norm{v}_\mu$.
\end{lemma}
See Appendix~\ref{proof: pv norm} for the proof.

\begin{lemma}
  \label{lemma: A NSD}
  Let Assumptions~\ref{assumption: ergodicity} \&~\ref{assumption: feature function bounded} hold. Then
  $A$ is negative semi-definite.
\end{lemma}
See Appendix~\ref{proof: A NSD} for the proof.

\section{TD Fixed Points}
\label{section: TD fixed points}
As previously discussed, linear TD is approximately a stochastic discretization of ODE~\eqref{eq linear td ode}.
The fixed points of linear TD, commonly referred to as TD fixed points~\citep{tsitsiklis1997analysis,sutton2018reinforcement}, are linked to the equilibria of this ODE.
We hence consider the linear system 
\begin{align}
  \label{eq: linear equation}
  A w + b = 0,
\end{align}
where $A$ is defined in~\eqref{eq: A definition} and $b$ is defined in~\eqref{eq: b definition}.
With linearly independent features,
\eqref{eq: linear equation} adopts a unique solution --- the unique TD fixed point.
Without assuming linear independence,
matrix $A$ is merely negative semi-definite (Lemma~\ref{lemma: A NSD}),
so~\eqref{eq: linear equation} can potentially adopt infinitely many solutions.
In light of this,
we refer to all solutions to the linear system as TD fixed points.
Namely,
we define
$\fW_* \doteq \qty{w_* \middle| Aw_* + b = 0}$
and refer to $\fW_*$ as \emph{the set of TD fixed points}.
A few questions arise naturally from this definition.
\begin{enumerate}[(Q1)]
  \item Is $\fW_*$ always non-empty?
  \label{question nonempty}
  \item If $\fW_*$ contains multiple weights, do those weights give the same value estimate?
  \label{question same value}
  \label{question mspbe}
  \item Do the iterates $\qty{w_t}$ generated by~\eqref{eqn: linear td update} converge to $\fW_*$?
  \label{question convergence}
\end{enumerate}
We shall give affirmative answers to all the questions above in the rest of the paper.
We use 
\begin{align}
  \label{eq: v_w definition}
  \textstyle v_w(s) \doteq x(s)^\top w
\end{align} to denote the value estimate for a state $s$ given a weight $w$.
Below, we answer (Q\ref{question nonempty}) and (Q\ref{question same value}) affirmatively as a warm-up. 
The affirmative answer to (Q\ref{question convergence}) is much more involved and is deferred to the next section.
\begin{lemma}
  \label{lemma: inner integral positive definite}
  Let Assumptions~\ref{assumption: ergodicity} \&~\ref{assumption: feature function bounded} hold.
  Then, $w^\top A w = 0 \iff v_w = 0$ a.e.
\end{lemma}
The proof is in Appendix~\ref{proof: inner integral positive definite}.
\begin{lemma}
  \label{lemma: W nonempty}
  Let Assumptions~\ref{assumption: ergodicity},~\ref{assumption: feature function bounded}, \&~\ref{assumption: transition kernel map zero to zero} hold. Then $\fW_*$ is non-empty.
\end{lemma}
The proof is in Appendix~\ref{proof: W nonempty}.
This answers (Q\ref{question nonempty}) affirmatively. 
\begin{lemma}
  \label{lemma: v = v' a.e. iff w' in W}
  Let Assumptions~\ref{assumption: ergodicity} \&~\ref{assumption: feature function bounded} hold.
  Given any $w\in\fW_*$ and any $w' \in \R[d]$,
  it holds that $v_w = v_{w'} \,\, {a.e.} \iff w' \in \fW_*$.
\end{lemma}
The proof is in Appendix~\ref{proof: v = v' a.e. iff w' in W}.
This answers (Q\ref{question same value}) affirmatively. 

\section{ODE Solutions}
\label{section: ode analysis}
In this section, we study the mean ODE~\eqref{eq linear td ode} related to linear TD and establish its convergence.
We use $w(t; w_0)$ to denote the solution to \eqref{eq linear td ode} with the initial condition $w(0; w_0) = w_0$.
Recall that matrix $A$ would be negative definite if the features are linearly independent.
Then, it would follow from standard dynamical system results~\citep{khalil2002nonlinear} that
$\lim_{t\to\infty} w(t; w_0) = -A^{-1}b$.
In other words,
regardless of the initial condition $w_0$,
a solution always converges to the globally asymptotically stable equilibrium $-A^{-1}b$.
Without assuming linear independence,
matrix $A$ is merely negative semi-definite (Lemma~\ref{lemma: A NSD}).
It is then impractical to expect all solutions to converge to the same point.
However,
can we still expect each $w(t; w_0)$ to converge to a $w_0$-dependent limit?
The answer is affirmative.
To proceed, we first perform a standard change of variable.
For any $w_0 \in \R[d]$ and any $w_* \in \fW_*$,
we have
  $\dv{\qty(w(t; w_0) - w_*)}{t} 
    = \dv{w(t; w_0)}{t}
    = Aw(t;w_0) + b - (Aw_* + b)
    = A(w(t; w_0) - w_*)$.
This indicates that $w(t; w_0) - w_*$ is a solution to the shifted ODE
  \begin{align}
    \label{eq shifted ode}
    \textstyle\dv{z(t)}{t} = A z(t).
  \end{align}
starting from $w_0 - w_*$.
Therefore,
to study the original ODE~\eqref{eq linear td ode},
it is sufficient to study the shifted ODE~\eqref{eq shifted ode}.
We use $z(t; z_0)$ to denote a solution to~\eqref{eq shifted ode} with the initial condition $z(0; z_0) = z_0$.
We then have
\begin{align}
  \label{eq z eq w}
  z(t;w_0 - w_*) = w(t;w_0) - w_*
\end{align}
When it does not confuse, we write $z(t; z_0)$ as $z(t)$ for simplicity.
Analogous to the definition of $\fW_*$, we define
$\fZ_* \doteq \qty{z \Big| Az = 0}$.

\begin{corollary}
  \label{corollary: vz equals 0 a.e. iff z in Z}
  Let Assumptions~\ref{assumption: ergodicity} \&~\ref{assumption: feature function bounded} hold.
  Then $v_z = 0 \,\, a.e. \iff z \in \fZ_*$.
\end{corollary}
\begin{proof}
We recall that Lemma~\ref{lemma: v = v' a.e. iff w' in W} holds for any reward function $r$. 
By letting $r = 0$, we obtain $b = 0$ and $\fW_* = \fZ_*$.
Therefore, given any $z_* \in \fZ_*$ and any $z \in \R[d]$, it holds that $v_{z_*} = v_z\quad\text{a.e.} \iff z\in\fZ_*$.
Furthermore, since $z_* \in \fZ_*$, it holds that $Az_* = 0$, which implies $z_*^\top A z_* = 0$.
Hence, by Lemma~\ref{lemma: inner integral positive definite}, it holds that $v_{z_*} = 0$ a.e.
As a result, we have $v_z = 0 \,\, a.e. \iff z \in \fZ_*$, which completes the proof.
\end{proof}
\subsection{Value Convergence}
Here, we prove the value estimate of the mean ODE~\eqref{eq linear td ode} converges for almost all states.
\begin{theorem}
  \label{thm: ode value convergence}
  Let Assumptions~\ref{assumption: ergodicity},~\ref{assumption: feature function bounded}, \&~\ref{assumption: transition kernel map zero to zero} hold.
  For any $w_0 \in \R[d]$ and any $w_* \in \fW_*$,
  there exists $\fS^+ \subseteq \fS$ with $\mu(\fS^+) = 1$, such that $\forall s \in \fS^+$,
  $\lim_{t\to\infty}  v_{w(t;w_0)}(s) = v_{w_*}(s)$.
\end{theorem}
\begin{proof}
Fix an arbitrary $z_* \in \fZ_*$.
We define $U(z) \doteq \frac{1}{2}\norm{z - z_*}^2$.
Then, for any $z_0 \in \R[d]$,
we have
\begin{align}
  \label{eq dz le 0}
  \textstyle \dv{U(z(t))}{t} =& 
  \textstyle (z(t) - z_*)^\top \dv{z(t)}{t}= \textstyle (z(t) - z_*)^\top A z(t)\\
  =& \textstyle (z(t) - z_*)^\top A(z(t) - z_*)
  \leq \textstyle 0 \explain{Lemma~\ref{lemma: A NSD}}.
\end{align}
We now claim 
\begin{align}
  \textstyle
  \label{eq zeros of shifted ode}
  \dv{U(z(t))}{t} = 0 \iff z(t) \in \fZ_*.
\end{align}
To see $\textstyle \dv{U(z(t))}{t} = 0 \impliedby z(t) \in \fZ_*$, we note $Az(t) = 0$ by definition when $z(t) \in \fZ_*$.
Hence, $\textstyle \dv{U(z(t))}{t} = (z(t) - z_*)^\top A z(t) = 0$.
To see $ \dv{U(z(t))}{t} = 0 \implies z(t) \in \fZ_*$, we note that $v_{z(t) - z_*} = 0$ a.e. when $\dv{U(z(t))}{t} = 0$ according to Lemma~\ref{lemma: inner integral positive definite}.
That implies $z(t) - z_* \in \fZ_*$ by Corollary~\ref{corollary: vz equals 0 a.e. iff z in Z}, which further implies $z(t) \in \fZ_*$ by the definition of $\fZ_*$.

The result in~\eqref{eq zeros of shifted ode} suggests that $U$ is almost a Lyapunov function except that $\dv{U(z(t))}{t}$ has multiple zeros.
The standard Lyapunov stability theorem (e.g., Theorem 4.1 of~\citet{khalil2002nonlinear}) thus does not apply.
\begin{xtheorem}
\label{thm: lasalle theorem}
  (LaSalle's theorem, Theorem 4.4 of~\citet{khalil2002nonlinear})
  Let $\Omega \subset \R[d]$ be a compact set that is positively invariant\footnote{A set $Z$ is a positively invariant set of~\eqref{eq shifted ode} if $z(0) \in Z \implies \forall t \in [0, \infty), z(t) \in Z$. Here, $z(t)$ denotes a solution to~\eqref{eq shifted ode} on $[0, \infty)$.} with respect to~\eqref{eq shifted ode}. 
  Let $U: \R[d] \to \R$ be a continuously differentiable function such that $\dv{U(z(t))}{t} \le 0$ whenever $ z(t) \in \Omega$.
  Let $E$ be the set of all points in $\Omega$ satisfying $\dv{U(z(t))}{t} = 0$ whenever $z(t) \in E$. 
  Let $M$ be the largest invariant set\footnote{A set $Z$ is an invariant set of~\eqref{eq shifted ode} if $z(0) \in Z \implies \forall t \in (-\infty, \infty), z(t) \in Z$. Here, $z(t)$ denotes a solution to~\eqref{eq shifted ode} on $(-\infty, \infty)$.} in $E$. 
  Then every solution $z(t)$ with $z_0 \in \Omega$ satisfies\footnote{The distance between a point $z$ and a set $\Omega$ is defined as $d(z, \Omega) \doteq \inf_{z' \in \Omega} \norm{z - z'}$.}
  $\lim_{t\to\infty} d(z(t), M) = 0$.
\end{xtheorem}
Given any $z_0\in\R[d]$, 
to apply LaSalle's theorem,
we define $\Omega \doteq \qty{z \mid \norm{z - z_*} \le \norm{z_0 - z_*}}$.
This set $\Omega$ is compact because it defines a closed ball centered on $z_*$ in $\R[d]$, thus is closed and bounded.
We first show that $\Omega$ is positively invariant.
For any $z_0' \in \Omega$,
it holds that for any $t \geq 0$,
$\norm{z(t; z_0') - z_*}^2 \leq \norm{z(0; z_0') - z_*}^2 = \norm{z_0' - z_*}^2 \leq \norm{z_0 - z_*}^2$,
where the first inequality holds because $\dv{\norm{z(t; z_0') - z_*}^2}{t} \leq 0$ by~\eqref{eq dz le 0} and the second inequality holds because $z_0' \in \Omega$.
Then, we conclude that $z(t; z_0') \in \Omega$ for any $t \geq 0$,
implying that $\Omega$ is positively invariant.
We use our previously defined $U$ as the $U$ for Theorem~\ref{thm: lasalle theorem}. Then $\dv{U(z(t))}{t} \leq 0$ holds by~\eqref{eq dz le 0}.

In light of~\eqref{eq zeros of shifted ode}, set $E$ defined in Theorem~\ref{thm: lasalle theorem} is then
$E = \fZ_* \cap \Omega$. 
We now show that $E$ itself is an invariant set, so that the set $M$ defined in Theorem~\ref{thm: lasalle theorem} is just $E$.
Let $z(t; z_0)$ be a solution to~\eqref{eq shifted ode} in $(-\infty, \infty)$ with $z_0 \in E$.
Define $\fS^+_{z_0} \doteq \qty{s \mid v_{z_0}(s) = 0, s \in \fS}$.
Since $z_0 \in \fZ_*$, it holds that $\mu(\fS^+_{z_0}) = 1$ by Corollary~\ref{corollary: vz equals 0 a.e. iff z in Z}.
Then, for any $t \in (-\infty, \infty)$ and $s' \in \fS^+_{z_0}$,
we have 
\begin{align}
  \label{eq zt integral form}
  z(t) =& \textstyle z_0 + \int_0^t Az(\tau) \dd{\tau}, \\
  x(s')^\top z(t)
  =& \textstyle x(s')^\top z_0
  +  \int_0^t \int_\fS x(s')^\top  x(s)\qty(\gamma (P_\pi x)(s)^\top - x(s)^\top)  \mu(\dd{s})z(\tau) \dd{\tau}\\ 
  =& \textstyle \int^t_0 \int_\fS x(s')^\top  x(s)\qty(\gamma (P_\pi x)(s)^\top - x(s)^\top) \mu(\dd{s}) z(\tau)\dd{\tau} \explain{$s'\in\fS^+_{z_0}$}.
\end{align}
Then, recall the definition of $v_w$ in~\eqref{eq: v_w definition}, we have
\begin{align}
  v_{z(t)}(s')
  =&\textstyle\int_0^t \int_\fS 
    v_{x(s')}(s)\qty(\gamma(P_\pi v_{z(\tau)})(s) - v_{z(\tau)}(s))  
    \mu(\dd{s})\dd{\tau}\\
  =&\textstyle\int_0^t
    \indot{v_{x(s')}}{\gamma P_\pi v_{z(\tau)} - v_{z(\tau)}}_\mu \dd{\tau}
  =\textstyle\int_t^0
    \indot{v_{x(s')}}{v_{z(\tau)} - \gamma P_\pi v_{z(\tau)}}_\mu \dd{\tau}.
\end{align}
Therefore, when $t \ge 0$, we get
\begin{align}
   v_{z(t)}(s')^2
   =&\textstyle\qty(
     \int_0^t
     \indot{v_{x(s')}}{\gamma P_\pi v_{z(\tau)} - v_{z(\tau)}}_\mu
     \dd{\tau})^2\\
    \le&\textstyle t\int_0^t
        \indot{v_{x(s')}}{\gamma P_\pi v_{z(\tau)} - v_{z(\tau)}}_\mu^2
        \dd{\tau} \explain{Cauchy-Schwarz inequality}\\
    \le&\textstyle t\norm{v_{x(s')}}^2_\mu \int_0^t
        \norm{\gamma P_\pi v_{z(\tau)} - v_{z(\tau)}}_\mu^2
        \dd{\tau} \explain{Cauchy–Schwarz inequality}.
\end{align}
It then holds that
\begin{align}
  \textstyle \int_{\fS} v_{z(t)}(s')^2\mu(\dd{s'})
  \le& \textstyle t\int_{\fS}\norm{v_{x(s')}}^2_\mu \mu(\dd{s'})
      \int_0^t
      \norm{\gamma P_\pi v_{z(\tau)} - v_{z(\tau)}}_\mu^2
      \dd{\tau}\\
  \le&\textstyle t C
      \int_0^t
      \norm{\gamma P_\pi v_{z(\tau)} - v_{z(\tau)}}_\mu^2
      \dd{\tau}
      \explain{$C \doteq \sup_{s' \in \fS} \norm{v_{x(s')}}_\mu^2 < \infty$}\\
  \le&\textstyle t C
      \int_0^t \qty(
      \gamma^2\norm{P_\pi v_{z(\tau)}}^2_\mu
      + 2\gamma \norm{P_\pi v_{z(\tau)}}_\mu\norm{v_{z(\tau)}}_\mu
      + \norm{v_{z(\tau)}}_\mu^2)
      \dd{\tau}\\
  \le& \textstyle t C (\gamma + 1)^2 \int_0^t \norm{v_{z(\tau)}}_\mu^2 \dd{\tau} \explain{Lemma~\ref{lemma: pv norm}}.
\end{align}
We thus have $\norm{v_{z(t)}}_\mu^2 \le t C (\gamma + 1)^2 \int_0^t \norm{v_{z(\tau)}}_\mu^2 \dd{\tau}$ for $t \ge 0$.
By Gronwall's inequality (see, e.g., Theorem~A.1 of \citet{liu2025ode}),  we get
$\norm{v_{z(t)}}_\mu^2 \le 0 \exp(t^2 C (\gamma +1 )^2) = 0$
for $t \ge 0$.
Similarly, when $t \le 0$, we have
\begin{align}
  v_{z(t)}(s')^2
  =&\qty(
     \int_t^0
     \indot{v_{x(s')}}{v_{z(\tau)} - \gamma P_\pi v_{z(\tau)}}_\mu
     \dd{\tau})^2\\
  \le&-t\int_t^0
        \indot{v_{x(s')}}{v_{z(\tau)} - \gamma P_\pi v_{z(\tau)}}_\mu^2
        \dd{\tau}
        \explain{Cauchy-Schwarz inequality}\\
  \le&-t\norm{v_{x(s')}}^2_\mu\int_t^0
        \norm{v_{z(\tau)} - \gamma P_\pi v_{z(\tau)}}_\mu^2
        \dd{\tau}.
        \explain{Cauchy-Schwarz inequality}
\end{align}
Following the same steps as above, we arrive at $\norm{v_{z(t)}}_\mu^2 \le -t C (\gamma + 1)^2 \int_t^0 \norm{v_{z(\tau)}}_\mu^2 \dd{\tau}$ for $t \le 0$.
By a reverse time version of Gronwall's inequality (see, e.g., Theorem A.2 of \citet{liu2025ode}),  it holds that
$\norm{v_{z(t)}}_\mu^2 \le 0 \exp(t^2 C (\gamma +1 )^2) = 0$
for $t \le 0$.
The fact $\norm{v_{z(t)}}_\mu^2 = 0$ for $t \in (-\infty, \infty)$ implies that $v_{z(t)} = 0$ a.e. for $t\in(-\infty, \infty)$.
According to Corollary~\ref{corollary: vz equals 0 a.e. iff z in Z}, we have $z(t) \in \fZ_*$ for $t \in (-\infty, \infty)$.
This means $Az(\tau) = 0$ in~\eqref{eq zt integral form},
implying that $z(t) = z_0 \in E$ for all $t \in (-\infty, \infty)$.
We have now proved that $E$ is an invariant set of~\eqref{eq shifted ode}.
Theorem~\ref{thm: lasalle theorem} then implies that for any $z_0$,
\begin{align}
  \textstyle
  \label{eq: z approaches solutions set}
  \lim_{t\to\infty} d(z(t; z_0), E) = 0.
\end{align}

We now convert the convergence of $z(t; z_0)$ back to $w(t; w_0)$.
To this end, fix any $z_* \in E$. 
Define $\fS^+_{z_*} \doteq \qty{s \mid x(s)^\top z_* = 0, s\in\fS}$.
Since $z_* \in E$, it holds that $\mu(\fS^+_{z_*}) = 1$ by Corollary~\ref{corollary: vz equals 0 a.e. iff z in Z}.
For any $s \in \fS^+_{z_*}$, it holds that
$\inf_{z\in E} \abs{\indot{x(s)}{z(t; z_0) - z}}
  =\inf_{z\in E} \abs{\indot{x(s)}{z(t; z_0)} - \indot{x(s)}{z}}\\
  \ge\abs{\indot{x(s)}{z(t; z_0)}} - \abs{\indot{x(s)}{z_*}}$.
Therefore, we have
\begin{align}
  \textstyle
  \lim_{t\to\infty} \abs{\indot{x(s)}{z(t; z_0)}}
  &\le \textstyle \lim_{t\to\infty}\inf_{z\in E} \abs{\indot{x(s)}{z(t; z_0) - z}} + \abs{\indot{x(s)}{z_*}}\\
  &\le \textstyle \norm{x(s)} \lim_{t\to\infty} \inf_{z\in E} \norm{z(t; z_0) - z} + \abs{\indot{x(s)}{z_*}} \\
  &= \norm{x(s)} \lim_{t\to\infty}d(z(t; z_0), E) + \abs{\indot{x(s)}{z_*}}
  = \abs{\indot{x(s)}{z_*}} \explain{By~\eqref{eq: z approaches solutions set}}
  = 0.
\end{align}
It then follows immediately that 
\begin{align}
  \textstyle
  \label{eq z value converge}
  \lim_{t\to\infty} \indot{x(s)}{z(t; z_0)} = 0.
\end{align}
Since~\eqref{eq z value converge} holds for any $z_0$ and corresponding trajectory $z(t; z_0)$,
it also holds for the trajectory $w(t;w_0) - w_*$ that starts from $w_0 - w_*$, i.e.,
\begin{align}
  \textstyle \lim_{t\to\infty} x(s)^\top (w(t; w_0) - w_*) &= 0\\
  \textstyle \lim_{t\to\infty} v_{w(t;w_0)}(s) &= v_{w_*}(s),
\end{align}
which completes the proof.
\end{proof}

\subsection{Weight Convergence}
The value convergence in Theorem~\ref{thm: ode value convergence} immediately implies that any solution $w(t; w_0)$ would eventually converge to the set $\fW_*$ as time progresses.
But is it possible that $w(t; w_0)$ keeps oscillating within $\fW_*$ or in neighbors of $\fW_*$ without ever converging to any single point?
In this section,
we rule out this possibility and prove that any solution will always converge to some fixed point, formalized in the next theorem.

\begin{theorem}
  \label{thm: ode weight convergence}
  Let Assumptions~\ref{assumption: ergodicity}, \ref{assumption: feature function bounded}, \&~\ref{assumption: transition kernel map zero to zero} hold.
  For any $w_0 \in \R[d]$,
  there exists a constant $w_\infty(w_0) \in \fW_*$, such that
  $\lim_{t\to\infty} w(t; w_0) = w_\infty(w_0)$.
\end{theorem}
We shall perform a finer analysis of the mean ODE~\eqref{eq shifted ode} and propose several helper lemmas to prove this theorem.
Firstly, it is well-known that $z(t; z_0)$ has a closed-form solution~\citep{khalil2002nonlinear} as
\begin{align}
  \textstyle
  \label{eqn: closed form solution}
  z(t;z_0) = \exp(At)z_0.
\end{align}
The standard approach to work with matrix exponential\footnote{For a square matrix $X$, its exponential is $\exp(X) \doteq \sum_{n=0}^\infty \frac{1}{n!} X^n$.} is to consider the Jordan normal form~\citep{horn2012matrix}.
Namely,
we decompose $A = PJP^{-1}$.
Here, $P$ is the invertible matrix in Jordan decomposition and $J$ is the Jordan matrix of $A$.
We use $\lambda_1, \lambda_2, \dots, \lambda_k$
to denote the $k$ distinct eigenvalues of $A$.
We use $m_1, \dots, m_k$ to denote the algebraic multiplicity of each eigenvalue.
Likewise, we use $g_1, \dots, g_k$ to denote the geometric multiplicity of each eigenvalue.
It always holds that $1 \le g_i \le m_i$ for $i = 1, 2, \dots, k$.
Each distinct eigenvalue $\lambda_i$ has exactly $g_i$ corresponding Jordan blocks.
The dimensions of each Jordan block are inconsequential for our analysis.
Therefore, to simplify our notation, we use $\rho_{i,j}$ to denote the dimension of the $j$-th Jordan block of $\lambda_i$.
Notably, $m_i = \sum_{j=1}^{g_i} \rho_{i,j}$.
Then, the Jordan matrix can be expressed as
$J =  \bigoplus_{i=1}^{k} \bigoplus_{j=1}^{g_i} B_{i,j}$,
where $B_{i,j} \in \C[\rho_{i,j} \times \rho_{i,j}]$ is the $j$-th Jordan block corresponding to the eigenvalue $\lambda_i$ and $\bigoplus$ denotes the direct matrix sum\footnote{Given block matrices $A$ and $B$, $A \bigoplus B = \mqty[A & \\ & B]$.}.
The matrix exponential can then be computed as
\begin{align}
  \label{eqn: exp(At) Jordan normal form}
  \exp(At)
  &= \textstyle \sum_{n=0}^{\infty} \frac{1}{n!} (P J P^{-1})^n t^n
  = P \exp(Jt) P^{-1} \\
  &= \textstyle P \exp(\bigoplus_{i=1}^{k} \bigoplus_{j=1}^{g_i} B_{i,j} t) P^{-1} 
  = P \left[\bigoplus_{i=1}^{k} \bigoplus_{j=1}^{g_i} \exp(B_{i,j} t)\right] P^{-1}.
\end{align}
As an example of the Jordan blocks, suppose $\rho_{1,1} = 3$, then
  $B_{1,1} = \mqty[\lambda_1 & 1 & 0\\
                  0 & \lambda_1 & 1\\
                  0 & 0 & \lambda_1]$.
We may also express each Jordan block as $B_{i,j} = \lambda_i I_{\rho_{i,j}} + N_{i,j}$,
where $N_{i,j}$ is a nilpotent matrix\footnote{A nilpotent matrix~(Section 0.9.13 of \citet{horn2012matrix}) is a square matrix $N$, such that $N^k$ = 0 for some positive integer $k$. The smallest $k$ is called the index of $N$.}.
Continuing with our example,
  $N_{1,1} = \mqty[0 & 1 & 0\\
                  0 & 0 & 1\\
                  0 & 0 & 0]$.
We denote the index of $N_{i,j}$ as $y_{i,j}$.
Using the property of a nilpotent matrix and matrix exponential,
we can compute $\exp(B_{i,j} t)$ as
\begin{align}
  \label{eq exp bij t}
  \textstyle \exp(B_{i,j} t)
  =& \textstyle \exp(\lambda_i I_{\rho_{i,j}} t + N_{i,j} t)
  = \textstyle \exp(\lambda_i t) \exp(N_{i,j} t)\\
  =& \textstyle \exp(\lambda_i t) \sum_{n=0}^\infty \frac{1}{n!} t^n N_{i,j}^n \explain{power series}\\
  =& \textstyle \exp(\lambda_i t) \sum_{n=0}^{y_{i,j}-1} \frac{1}{n!} t^n N_{i,j}^n \explain{$N_{i,j}^{n} = 0, \forall n \ge y_{i,j}$}\\
  \label{eqn: exp^Bt}
  =& \textstyle \exp(\Re(\lambda_i)t) \exp(\mi \Im(\lambda_i)t) \sum_{n=0}^{y_{i,j}-1} \frac{1}{n!} t^n N_{i,j}^n \\
  =& \textstyle \exp(\Re(\lambda_i)t)\left(\cos(\Im(\lambda_i)t) + \mi \sin(\Im(\lambda_i))\right) \sum_{n=0}^{y_{i,j}-1} \frac{1}{n!} t^n N_{i,j}^n \explain{Euler's formula}.
\end{align}
Here, $\Im(\cdot)$ denotes the imaginary part.
The fact that matrix $A$ is negative semi-definite (Lemma~\ref{lemma: A NSD}) implies $\Re(\lambda_i) \leq 0$.\footnote{Let $A$ be a negative semi-definite matrix and $u$ be an eigenvector of A having eigenvalue $\lambda$, it holds that $\Re(u^H A u) = \Re(\lambda\norm{u}^2) = \Re(\lambda)\norm{u}^2 \le 0$. Since $\norm{u}^2 > 0$, we have $\Re(\lambda) \le 0$.}
We now branch into cases for $\Re(\lambda) < 0$ and $\Re(\lambda) = 0$.
\begin{lemma}
  \label{lemma: convergence re(lambda) negative}
  Let Assumptions~\ref{assumption: ergodicity} \&~\ref{assumption: feature function bounded} hold.
  If $\Re(\lambda_i) < 0$, then it holds that
  $\lim_{t \to \infty} \\\exp(B_{i,j} t) = 0 \qq{$\forall j \in \qty{1, \dots, g_i}$.}$
\end{lemma}
The proof is in Appendix~\ref{proof: convergence re(lambda) negative}.
This is intuitive because all terms other than $\exp(\Re(\lambda_i)t)$ are at most polynomial and are dominated by the exponential.
To analyze the case of $\Re(\lambda_i) = 0$, we make the following two observations.
\begin{lemma}
  \label{lemma: exp^Bt bounded}
  Let Assumptions~\ref{assumption: ergodicity} \&~\ref{assumption: feature function bounded} hold.
  $\forall i, j$,
  it holds that $\sup_{t\in [0, \infty)} \norm{\exp(B_{i,j} t)} < \infty$.
\end{lemma}
The proof is in Appendix~\ref{proof of e^Bt bounded}.
Intuitively, the boundedness holds because $w(t; w_0)$ is bounded for any $w_0\in\R[d]$ (Lemma~\ref{lemma: w(t) bounded}).
Furthermore, it follows from the value convergence that
\begin{corollary}
\label{corollary: dz/dt diminish}
Let Assumpitons~\ref{assumption: ergodicity},~\ref{assumption: feature function bounded}, \&~\ref{assumption: transition kernel map zero to zero} hold. Then  $\forall z_0 \in \R[d]$,
$\lim_{t\to\infty} \dv{z(t; z_0)}{t} = 0$.
\end{corollary}
The proof is in Appendix~\ref{proof: dz/dt diminish}.
We are now ready to discuss the case $\Re(\lambda_i) = 0$.
\begin{lemma}
  \label{lemma: convergence re(lambda) zero}
  Let Assumptions~\ref{assumption: ergodicity}, ~\ref{assumption: feature function bounded}, and~\ref{assumption: transition kernel map zero to zero} hold.
  If $\Re(\lambda_i) = 0$, then it holds that for any $j \in \qty{1, \dots, g_i}$ and any $t \geq 0$,
  $\exp(B_{i,j} t) = I_{\rho_{i,j}}$.
\end{lemma}
The proof is in Appendix~\ref{proof: convergence re(lambda) zero}.
Intuitively, if $\Re(\lambda_i)=0$,
we must have $y_{i,j} = 1$.
Otherwise, $\exp(B_{i,j} t)$ cannot be bounded, 
leading to a contradiction with Lemma~\ref{lemma: exp^Bt bounded}.
Furthermore, it must hold that $\Im(\lambda_i) = 0$.
Otherwise, the derivative will not diminish,
leading to a contradiction with Corollary~\ref{corollary: dz/dt diminish}.

Combining Lemmas~\ref{lemma: convergence re(lambda) negative} \&~\ref{lemma: convergence re(lambda) zero}, 
we have $\lim_{t\to\infty} \exp(B_{i,j} t) =I_{\rho_{i,j}} \mathbb{I}\qty{\Re(\lambda_i) = 0}$.
Here, $\mathbb{I}\qty{\cdot}$ is the indicator function.
Therefore, $\lim_{t\to\infty} \exp(At)$ exists, and we define
\begin{align}
  \textstyle
  \label{eq limit of A}
  A_\infty \doteq \lim_{t\to\infty} \exp(At) = P \qty(\bigoplus_{i=1}^k \bigoplus_{j=1}^{g_i} I_{\rho_{i,j}} \mathbb{I}\qty{\Re(\lambda_i) = 0}) P^{-1}.
\end{align}
Plugging this back to~\eqref{eqn: closed form solution}
confirms that for any trajectory $z(t; z_0)$ of the ODE~\eqref{eq shifted ode},
\begin{align}
  \textstyle
  \label{eq: z(t) limit}
  \lim_{t\to\infty} z(t;z_0) = A_\infty z_0.
\end{align}

\begin{proof}[of Theorem~\ref{thm: ode weight convergence}]
Now, we have everything to prove the main statement.
We recall that $w(t;w_0) - w_*$ is a trajectory of~\eqref{eq shifted ode} starting from $w_0 - w_*$.
Therefore,
we have from~\eqref{eq: z(t) limit} 
that
$\lim_{t\to\infty}w(t;w_0) - w_* = A_\infty(w_0 - w_*)$.
In other words,
$\lim_{t\to\infty} w(t;w_0) = w_\infty(w_0) \doteq A_\infty(w_0 - w_*) + w_*$.
Referring to Theorem~\ref{thm: ode value convergence}, there exists $\fS^+ \subseteq \fS$, with $\mu(\fS^+) = 1$, such that
$\lim_{t\to\infty} x(s)^\top w(t; w_0) 
  = x(s)^\top w_\infty(w_0)
  = v_{w_*}(s)$
for all $s \in \fS^+$ and $w_0 \in \R[d]$.
We get $v_{w_\infty(w_0)} = v_{w_*}$ a.e. for any $w_0 \in \R[d]$.
Therefore, we have $w_\infty(w_0) \in \fW_*$ for all $w_0 \in \R[d]$ by Lemma~\ref{lemma: v = v' a.e. iff w' in W},
which completes the proof.
\end{proof}

\subsection{Bounded Invariant Sets}
In the ODE methods for stochastic approximation, if the mean ODE of the stochastic approximation algorithm (cf. ODE~\eqref{eq linear td ode} for linear TD~\eqref{eqn: linear td update}) is not globally asymptotically stable, usually one can only expect that the iterates of the stochastic approximation converge to a bounded invariant set of the ODE.
In light of this, we now study the bounded invariant sets of ODE~\eqref{eq linear td ode}. 
We first study the bounded solutions to the ODE on $(-\infty, +\infty)$.
\begin{theorem}
  \label{theorem: bounded solution constant}
  Let Assumptions~\ref{assumption: ergodicity},~\ref{assumption: feature function bounded}, \&~\ref{assumption: transition kernel map zero to zero} hold.
  Let $w(t)$ be a bounded solution to ODE~\eqref{eq linear td ode} on $(-\infty, +\infty)$, i.e.,
  $\sup_{t\in(-\infty, +\infty)} \norm{w(t)} < \infty$.
  It then holds that $w(t)$ is constant and is in $\fW_*$,
  i.e.,
  there exists some $w_* \in \fW_*$ such that $w(t) = w_*$ holds for any $t \in (-\infty, +\infty)$.
\end{theorem}
\begin{proof}
  Let $z(t; z_0)$ be any bounded solution to~\eqref{eq shifted ode} on $(-\infty, \infty)$. 
  By~\eqref{eqn: exp(At) Jordan normal form} and~\eqref{eq limit of A},
  it holds that, for any $t \in (-\infty, \infty)$,
  \begin{align}
    \exp(At) A_\infty
    =&\textstyle P \qty(\bigoplus_{i=1}^k \bigoplus_{j=1}^{g_i} \exp(B_{i,j}t)) \qty(\bigoplus_{i=1}^k \bigoplus_{j=1}^{g_i} I_{\rho_{i,j}} \mathbb{I}\qty{\Re(\lambda_i) = 0}) P^{-1} \\
    =&\textstyle P \qty(\bigoplus_{i=1}^k \bigoplus_{j=1}^{g_i} I_{\rho_{i,j}} \mathbb{I}\qty{\Re(\lambda_i) = 0}) \qty(\bigoplus_{i=1}^k \bigoplus_{j=1}^{g_i} \exp(B_{i,j}t)) P^{-1} \\
    =&\textstyle A_\infty \exp(At).
  \end{align}
  For any $t \in (-\infty, \infty)$ and $t' > 0$, we then have
  \begin{align}
    \textstyle \norm{z(t; z_0) - A_\infty z_0}
    =& \textstyle\norm{\exp(At)z_0 - A_\infty z_0} \\
    =& \textstyle\norm{\exp(A(t'+t) - At')z_0 - \exp(At')\exp(- At') A_\infty z_0} \\
    =& \textstyle\norm{\exp(A(t'+t) - At')z_0 - \exp(At')A_\infty \exp(- At') z_0} \\
    \leq& \norm{\exp(A(t'+t)) - \exp(At')A_\infty} \norm{\exp(-At')z_0} \\
    =& \textstyle\norm{\exp(A(t'+t)) - \exp(At')A_\infty} \norm{z(-t'; z_0)} \\
    \leq& \norm{\exp(A(t'+t)) - \exp(At')A_\infty}  \sup_{t''\in(-\infty, \infty)}\norm{z(t''; z_0)}\label{eq tmp bound 1}.
  \end{align}
  We note that
  $\lim_{t'\to\infty} \exp(A(t'+t)) - \exp(At')A_\infty = A_\infty - A_\infty A_\infty$.
  In light of~\eqref{eq limit of A}, it holds that
  \begin{align}
    A_\infty A_\infty 
    &= \textstyle P \qty(\bigoplus_{i=1}^k \bigoplus_{j=1}^{g_i} I_{\rho_{i,j}} \mathbb{I}\qty{\Re(\lambda_i) = 0}) P^{-1}
    P \qty(\bigoplus_{i=1}^k \bigoplus_{j=1}^{g_i} I_{\rho_{i,j}} \mathbb{I}\qty{\Re(\lambda_i) = 0}) P^{-1}\\
    &= \textstyle P \qty(\bigoplus_{i=1}^k \bigoplus_{j=1}^{g_i} I_{\rho_{i,j}} \mathbb{I}\qty{\Re(\lambda_i) = 0}) \qty(\bigoplus_{i=1}^k \bigoplus_{j=1}^{g_i} I_{\rho_{i,j}} \mathbb{I}\qty{\Re(\lambda_i) = 0})P^{-1}\\
    &=\textstyle  P \qty(\bigoplus_{i=1}^k \bigoplus_{j=1}^{g_i} I_{\rho_{i,j}} \mathbb{I}\qty{\Re(\lambda_i) = 0}) P^{-1}
    = \textstyle A_\infty.
  \end{align}
  Hence, we obtain $\lim_{t'\to\infty} \exp(A(t'+t)) - \exp(At')A_\infty = 0$.
  Taking $t' \to \infty$ on both sides of~\eqref{eq tmp bound 1} then yields
  $\norm{z(t;z_0) - A_\infty z_0} \le 0 \cdot \sup_{t''\in(-\infty, \infty)}\norm{z(t''; z_0)} = 0$,
  where, to obtain the last equality, we have used the boundedness of $z(t; z_0)$.
  This concludes that for any $t \in (-\infty, \infty)$, we have $z(t;z_0) = A_\infty z_0$.
  Thus, $z(t;z_0)$ is constant.

  We fix any $w_* \in \fW_*$ and recall~\eqref{eq z eq w}.
  Suppose $w(t;w_0)$ is a bounded solution to~\eqref{eq linear td ode} for $ t \in (-\infty, \infty)$.
  Then, $z(t; w_0 - w_*)$ is a bounded solution to~\eqref{eq shifted ode} for $t \in (-\infty, \infty)$.
  We have shown that $z(t; w_0 - w_*)$ is constant whenever it is a bounded solution.
  So, $w(t;w_0)$ is also constant.
  By Theorem~\ref{thm: ode weight convergence}, it holds that $\lim_{t\to\infty} w(t; w_0) = w_\infty(w_0) \in \fW_*$
  for any $w_0 \in \R[d]$.
  Consequently, it must hold that $w(t; w_0) \in \fW_*$ for all $t \in (-\infty, \infty)$ because it is a constant solution. 
\end{proof}
Theorem~\ref{theorem: bounded solution constant} leads to the following characterization of a bounded invariant set.
\begin{corollary}
\label{corollary: bounded invariant set}
  Let Assumptions~\ref{assumption: ergodicity},~\ref{assumption: feature function bounded}, \&~\ref{assumption: transition kernel map zero to zero} hold.
  If $\fW$ is a bounded invariant set of ODE~\eqref{eq linear td ode}, then $\fW \subseteq \fW_*$.
\end{corollary}
The proof is in Appendix~\ref{proof: bounded invariant set}.

\section{Convergence of Linear TD}
\label{section: stochastic approximation}

Having fully characterized the mean ODE~\eqref{eq linear td ode},
we are now ready to connect the linear TD update~\eqref{eq: linear equation} with the mean ODE.
To this end, we consider the joint process $Y_t = (S_t, S_{t+1})$,
which is also a Markov chain.
Recall that $S_t \in \R[K]$.
We then regard $Y_t$ as a vector in $\R[2K]$. 
As a result, the Markov chain $\qty{Y_t}$ evolves in a subset of $\R[2K]$, denoted as $\fY$.
Since $\qty{Y_t}$ is essentially the same chain as $\qty{S_t}$ but viewed differently,
Assumption~\ref{assumption: ergodicity} implies that $\qty{Y_t}$ adopts a unique stationary distribution, referred as $\eta$.
We additionally refer to the transition kernel of $\qty{Y_t}$ as $P_\fY$. 
Given a measurable and integrable function $f: \fY \to \R[d]$, we define $P_\fY f$ as $(P_\fY f)(y) \doteq \int_\fY f(y') P_\fY (\dd{y'} \mid y)$.
With this joint process, 
the linear TD update~\eqref{eqn: linear td update} can be written as
$w_{t+1} = w_t + \alpha_t H(w_t, Y_{t+1}), \text{where } 
 H(w, y) = \qty(r(s) + \gamma w^\top x(s') - w^\top x(s))x(s)$.
Here, we have used shorthand $y \doteq (s, s')$.
The expected update is then $h(w) = \E_{y \sim \eta} \qty[H(w, y)] = Aw + b$.
We now make a few standard assumptions on the behavior of $\qty{Y_t}$.
\begin{xassumption}[Poisson Equation] 
  \label{assumption poisson}
    There exists a function $\nu: \fY \to \R[d]$, such that $\nu_w(y) - (P_\fY \nu_w)(y) = H(w, y) - h(w)$ holds for all $w, y$. Furthermore, there exists a constant $C_\tref{assumption poisson}$ such that $\forall w, y$,
  $\textstyle\norm{\nu_w(y)} \le C_\tref{assumption poisson}(1 + \norm{w})(1 + \norm{y});\quad
   \textstyle\norm{(P_\fY \nu_w)(y) - (P_\fY \nu_{w'})(y)} \le C_\tref{assumption poisson}\norm{w - w'}(1 + \norm{y})$.
\end{xassumption}
\begin{xassumption}[Law of Iterated Logarithm]
  \label{assumption lil}
  For any $w$, there exists a sample-path-dependent finite constant (i.e., a random variable that is finite a.s.) $\zeta_\tref{assumption lil}$, such that\\ 
  $\norm{\sum_{t=1}^n \qty(H(w, Y_t) - h(w))} \leq \zeta_\tref{assumption lil} \sqrt{n \log\log n} \qq{a.s.}$
\end{xassumption}
\begin{remark}
  Both Assumptions~\ref{assumption poisson} \&~\ref{assumption lil} are concerned about the behavior of the Markov chain $\qty{Y_t}$.
  Assumption~\ref{assumption poisson} is the standard way to handle Markovian dependence in the update noise $H(w, Y_t) - h(w)$, while Assumption~\ref{assumption lil} says the cumulative fluctuation of that update noise grows sufficiently slowly almost surely to be absorbed by the diminishing step sizes.
  Furthermore, they are weak in that, if $\fS$ is finite and $\qty{S_t}$ is irreducible and aperiodic, 
  they hold automatically (in fact they hold for any function $H : \R[k] \times \fY \to \R[k]$ and the corresponding expectation $h : \R[k] \to \R[k]$, not just the ones corresponding to linear TD).
  In particular, see Section 8.2.3 of \citet{puterman2014markov} for how Assumption~\ref{assumption poisson} is verified for finite irreducible chains.
  See Remark 3 of \citet{liu2025ode} for how Assumption~\ref{assumption lil} is verified for finite irreducible and aperiodic chains.
  For a generic state space $\fS$, there are multiple sufficient conditions to imply both.
  For example, if $\qty{Y_t}$ is positive Harris\footnote{See pages 235 and 204 of \citet{meyn2012markov} for the definitions of positive and Harris chains, respectively.} and satisfies the Lyapunov drift condition (V4)\footnote{See page 386 of \citet{meyn2012markov} for the definition of the (V4) condition.} with a constant function $V(y) = 1 \, \forall y$,\footnote{This condition again trivially holds for finite irreducible and aperiodic chains.}
  Assumption~\ref{assumption poisson} can be satisfied via the boundedness of the fundamental kernel through Theorem 2.3 of \citet{glynn1996liapounov}.
  Assumption~\ref{assumption lil} can be satisfied via $V$-uniform ergodicity\footnote{See page 387 of \citet{meyn2012markov} for the definition of $V$-uniform ergodicity.} (which is implied by (V4) through Theorem 16.0.1 (iv) of \citet{meyn2012markov}) through Theorem 17.0.1 (iii) and (iv) of \citet{meyn2012markov}.
  Such implications are standard and have been extensively studied in the existing literature. 
  But detailing them necessitates introducing quite a few additional definitions, 
  which is not our contribution and deviates from the goal of this paper. 
  So, we present the current forms of Assumption~\ref{assumption poisson} \&~\ref{assumption lil} directly.
  They are also widely used in existing works.
  For example, Assumption~\ref{assumption poisson} is the most important assumption in~\citet{benveniste1990MP}.
  Assumption~\ref{assumption lil} is also used in \citet{liu2025ode}.
\end{remark}
We additionally make the following assumption regarding the state space.
\begin{xassumption}
  \label{assumption: state space compact}
  $\fS$ is compact.  
\end{xassumption}
  \begin{remark}
    We note that Assumption~\ref{assumption: state space compact} is not needed to characterize the TD fixed point or the mean ODE in Sections~\ref{section: TD fixed points} \&~\ref{section: ode analysis}.
    It is also unnecessary for linear TD itself.
    We employ Assumption~\ref{assumption: state space compact} as a convenient sufficient condition to satisfy the assumptions made in~\citet{benveniste1990MP} to work with infinite state spaces.
    That being said,
    we do envision that Assumption~\ref{assumption: state space compact} can be relaxed to some non-compact state space with some additional assumptions,
    as long as the assumptions in~\citet{benveniste1990MP} are satisfied.
    This relaxation is standard and too technical to be included in this paper, but it is worth noting that it is possible.
  \end{remark}
The next theorem demonstrates the stability of linear TD.
\begin{theorem}
  \label{theorem: w_t a.s. bounded}
  Let Assumptions \ref{assumption: learning rate}, \ref{assumption: ergodicity}, \ref{assumption: feature function bounded}, \ref{assumption: transition kernel map zero to zero}, \ref{assumption poisson}, \& \ref{assumption: state space compact} hold.
  Then, the iterates $\qty{w_t}$ generated by \eqref{eqn: linear td update} is stable, i.e., $\sup_t \norm{w_t} < \infty \qq{a.s.}$
\end{theorem}
The proof is in Appendix~\ref{proof: w_t a.s. bounded}.
It is based on Theorem 17(a) of \citet{benveniste1990MP},
where we use $U(w) \doteq \norm{w - w_*}^2 + \norm{w_*}^2$
as an energy function.
Here, $w_*$ is any fixed point in $\fW_*$.
In the canonical analysis with linearly independent features,
$\norm{w - w_*}^2$ is commonly used as the Lyapunov function.
Without assuming linear independence,
we additionally add $\norm{w_*}^2$ such that $U(w) \geq \frac{1}{2}\norm{w}^2$ always holds.
Apparently, this $U(w)$ is not a Lyapunov function,
but it is sufficient to prove stability per \citet{benveniste1990MP}.
In particular,
this $U(w)$ satisfies Conditions (i) and (ii) on page 239 of \cite{benveniste1990MP}.

Having established the stability,
the convergence of $\qty{w_t}$ to a bounded invariant set is now expected from standard stochastic approximation results \citep{benveniste1990MP,kushner2003stochastic,borkar2009stochastic,liu2025ode}.
Here, we use Corollary 1 of \citet{liu2025ode} to establish the desired convergence,
which is essentially a simplified version of Theorem~1 in Chapter 5 of \citet{kushner2003stochastic}.

\begin{theorem}
\label{thm: linear TD converge to invariant set}
  Suppose Assumptions \ref{assumption: learning rate}, \ref{assumption: ergodicity}, \ref{assumption: feature function bounded}, \ref{assumption: transition kernel map zero to zero}, \ref{assumption poisson}, \ref{assumption lil}, \& \ref{assumption: state space compact} hold.
  Let $w_* \in \fW_*$.
  Then, for any $w_0\in\R[d]$, there exists $\fS^+ \subseteq \fS$ with $\mu(\fS^+) = 1$, such that the iterates $\qty{w_t}$ generated by \eqref{eqn: linear td update} satisfy $\forall s \in \fS^+$,
  $\lim_{t\to\infty} x(s)^\top w_t = x(s)^\top w_* \qq{a.s.}$
\end{theorem}
The proof is in~\ref{proof: linear TD converge to invariant set}.
We conclude this section with an open problem. Theorem~\ref{thm: linear TD converge to invariant set} is in parallel with Theorem~\ref{thm: ode value convergence} in that both are concerned with the weight convergence to a set.
In light of Theorem~\ref{thm: ode weight convergence},
a natural question arises: 
can we prove that $\qty{w_t}$ converges to a single (possibly sample path dependent) point?
Unfortunately,
we do not have a definite answer now.
The best we know is that from Theorem~\ref{thm: linear TD converge to invariant set},
any convergent subsequence of $\qty{w_t}$ will converge to some sample path dependent point in $\fW_*$.
We further show that $\qty{w_t}$ exhibits some local stability along a convergent subsequence in the following sense.
\begin{corollary}
\label{corollary: local stability}
  Let Assumptions \ref{assumption: learning rate}, \ref{assumption: ergodicity}, \ref{assumption: feature function bounded}, \ref{assumption: transition kernel map zero to zero}, \ref{assumption poisson}, \ref{assumption lil}, \& \ref{assumption: state space compact} hold.
  Then, there exists at least one convergent subsequence of $\qty{w_t}$, denoted as $\qty{w_{t_k}}$,
  such that
  for any $T < \infty$, it holds that 
  $\lim_{k\to\infty} \max_{t_k \le j \le m(t_k, T)} \norm{w_j - w_*} = 0$,
  where $w_* \in \fW_*$ is the limit of $\qty{w_{t_k}}$.
\end{corollary}
Here, $m(t, T)$ is defined as 
$m(t, T) \doteq \max\qty{n \middle| \sum_{i = t}^n \alpha_i \leq T}$.
Intuitively, the magnitude of the updates of linear TD in~\eqref{eqn: linear td update} is controlled by the learning rate $\alpha_i$.
Then, $\qty{t, t+1, \dots, m(t, T)}$ denotes a period during which the total magnitude of updates is no more than $T$. 
The proof is in~\ref{proof: local stability}.
Corollary~\ref{corollary: local stability} essentially confirms that $\qty{w_t}$ will visit (arbitrarily small) neighbors of $w_*$ infinitely many times.
The number of update steps during each visit (i.e., $m(t_k, T) - t_k$) diverges to $\infty$.

\section{Finite State Space}
\label{section: finite state space}
We now consider a special case where $S$ is finite,
which is also considered in many previous works~\citep{sutton1988learning,dayan1992tdlambda,tsitsiklis1997analysis}.
This special case allows us to make a more detailed comparison with previous approaches and 
provide finer characterization of the set of TD fixed points $\fW_*$.

When $\fS$ is finite, we can represent the transition dynamics as a stochastic matrix $P \in \qty[0,1]^{\ns\times\ns}$, where $P(i, j) = p\qty(s_j \mid s_i)$.
Similarly, we overload $r$ to denote the vector representation of the reward function when it does not confuse.
We now define the Bellman operator $\bop: \R[\ns] \to \R[\ns]$ as $\bop v \doteq r + \gamma P v$, given $v \in \R[\ns]$.
We shall also represent the feature function $x$ compactly as a matrix $X \in \R[\ns \times d]$, where $x(s)$ is the $s$-th row of $X$.
Remarkably,
Assumptions \ref{assumption: ergodicity}, \ref{assumption: feature function bounded}, \ref{assumption: transition kernel map zero to zero}, \ref{assumption poisson}, \ref{assumption lil}, \& \ref{assumption: state space compact}
trivially hold for finite $\fS$ when $\qty{S_t}$ is irreducible and aperiodic.
We also define $D \in \R[\ns\times\ns]$ as the diagonal matrix whose diagonal terms are the stationary distribution of the induced Markov chain $\qty{S_t}$.
Under Assumption~\ref{assumption: ergodicity}, $D$ is symmetric and positive definite.

We now review the canonical convergence analysis of linear TD (cf. \citet{tsitsiklis1997analysis}) under linearly independent features (i.e., $X$ has a full column rank).
Given any vector $v \in \R[\ns]$, define the projection of $v$ onto the column space of $X$ as
\begin{align}
  \label{eq canonical projection}
  \textstyle\Pi v \doteq \arg\min_{v_0 \in \qty{Xw \middle| w \in \R[d]}} \Dnorm{v_0 - v}^2
  = X \arg\min_{w\in\R[d]} \norm{Xw - v}_D^2.
\end{align}
Suppose the features are linearly independent, i.e., $X$ has linearly independent columns, \citet{tsitsiklis1997analysis} prove that
$\Pi = X \qty(X^\top D X)^{-1} X^\top D$.
The fact that $X^\top D X$ is nonsingular follows directly from the linear independence of the features.
The composition $\Pi \bop$ of the projection operator and the Bellman operator is a contraction mapping with respect to $\Dnorm{\cdot}$~\citep{tsitsiklis1997analysis} and thus adopts a unique fixed point $v_*$, such that
\begin{align}
  \label{eq canonical v star}
  \textstyle\Pi \bop v_* = v_*
\end{align}
according to the Banach fixed-point theorem.
Because of feature linear independence, there exists a unique $w_*$, such that $Xw_* = v_*$.
This $w_*$ has the following two remarkable properties.
First, $w_*$ is the unique zero of the mean squared projected Bellman error (MSPBE)~\citep{sutton2018reinforcement}.
Here, MSPBE is defined as
$\text{MSPBE}(w) \doteq \Dnorm{\Pi(\bop X w - X w)}^2$.
Second,
$w_*$ is the unique solution to linear system~\eqref{eq: linear equation},
where $A = X^\top D \qty(\gamma P - I)X$ and $b = X^\top D r$~\citep{bertsekas1996neuro}.
Notably, $A$ and $b$ here are defined in the same way as~\eqref{eq: A definition} and~\eqref{eq: b definition}.
The uniqueness of these two properties 
follows from the fact that $A$ is negative definite when features are linearly independent~\citep{bertsekas1996neuro,tsitsiklis1997analysis}.
Under Assumptions~\ref{assumption: learning rate}, \ref{assumption: ergodicity}, and the feature linear independence assumption, \citet{tsitsiklis1997analysis} prove that the iterates $\qty{w_t}$ generated by \eqref{eqn: linear td update} satisfy $\lim_{t\to\infty} w_t = w_*$ a.s.
As a result,
the weight $w_*$ is the unique TD fixed point.
Importantly,
the negative definiteness of $A$ ensures that the ODE~\eqref{eq linear td ode} is globally asymptotically stable,
which is key to the convergence proof of \citet{tsitsiklis1997analysis}.
In the above convergence analysis,
the linear independence of the features is vital in the following three aspects:
\begin{enumerate}[(i)]
  \item it ensures that the TD fixed point is unique;
  \item it ensures that ODE~\eqref{eq linear td ode} is well-behaved;
  \item it ensures that TD update~\eqref{eqn: linear td update} can be properly related to ODE~\eqref{eq linear td ode}.
\end{enumerate}
Consequently,
removing the feature linear independence assumption would at least entail three corresponding challenges.
We addressed each of them in the previous sections.
Namely,
in Section~\ref{section: TD fixed points},
we analyzed TD fixed points with arbitrary features.
In Section~\ref{section: ode analysis},
we analyzed ODE trajectories with arbitrary features.
In Section~\ref{section: stochastic approximation},
we established the convergence of linear TD with arbitrary features.

Next, we show how we can relate $\fW_*$ with the MSPBE.
If $X$ does not have full column rank,
the $\arg\min$ in~\eqref{eq canonical projection} may return a set of weights instead of a unique one.
In light of this, we redefine $\Pi$ to always select the weight with the smallest norm.
Namely, 
we redefine $\Pi$ as
\begin{align}
  \label{eq new projection}
  \textstyle \Pi v \doteq X \arg \min_{w \in \arg \min_w \Dnorm{Xw - v}^2} \norm{w}.
\end{align}
In the rest of the section,
we always use this more general definition of $\Pi$.
It turns out that this new definition of $\Pi$ also enjoys a closed-form expression.
\begin{lemma}
  \label{lemma: projection matrix}
  Let Assumption~\ref{assumption: ergodicity} hold. 
  Let $(\cdot)^\dagger$ denote the pseudo-inverse. Then, we have
  $\Pi = X (D^{1/2}X)^\dagger D^{1/2}$.
\end{lemma}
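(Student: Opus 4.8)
The plan is to reduce the weighted least-squares problem defining $\Pi$ to an ordinary (unweighted) least-squares problem and then invoke the minimum-norm characterization of the Moore--Penrose pseudo-inverse. Since $D$ is symmetric and positive definite (Assumption~\ref{assumption: irreducible} guarantees $\mu(s) > 0$ for all $s \in \fS$), its symmetric square root $D^{1/2}$ exists, is invertible, and satisfies $D^{1/2} D^{1/2} = D$. The first step is therefore to rewrite
$$\Dnorm{Xw - v}^2 = (Xw - v)^\top D (Xw - v) = \norm{D^{1/2} X w - D^{1/2} v}^2,$$
so that $\arg\min_w \Dnorm{Xw - v}^2 = \arg\min_w \norm{(D^{1/2}X) w - (D^{1/2}v)}^2$. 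This identifies the inner minimization in~\eqref{eq new projection} with a standard Euclidean least-squares problem for the matrix $M \doteq D^{1/2}X$ and target $u \doteq D^{1/2} v$.

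The second step is to recall the defining variational property of the pseudo-inverse: for any matrix $M$ and vector $u$, the vector $M^\dagger u$ is precisely the element of smallest Euclidean norm among all minimizers of $w \mapsto \norm{Mw - u}^2$. Consequently, the doubly nested selection in~\eqref{eq new projection}---first take all weighted-least-squares minimizers, then keep the one with smallest $\norm{w}$---returns exactly $M^\dagger u = (D^{1/2}X)^\dagger D^{1/2} v$. Substituting back yields $\Pi v = X (D^{1/2}X)^\dagger D^{1/2} v$ for every $v \in \R[\ns]$, which is the claimed identity.

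The only point requiring care is the justification of this minimum-norm characterization, which I would supply either by citation or by a short self-contained argument via the singular value decomposition $M = U \Sigma V^\top$: expanding $w$ in the orthonormal basis given by the columns of $V$, the residual $\norm{Mw - u}^2$ decouples, so the components of $w$ along the right singular vectors with nonzero singular values are uniquely pinned down, while the components along the kernel of $M$ are free; setting the latter to zero leaves the residual unchanged and simultaneously minimizes $\norm{w}$, and the resulting $w$ coincides with $M^\dagger u$. I expect this pseudo-inverse fact to be the main (and essentially the only) nontrivial ingredient, as the reduction in the first step is purely algebraic. As a consistency check I would also note that when $X$ has full column rank, $(D^{1/2}X)^\dagger = (X^\top D X)^{-1} X^\top D^{1/2}$, so the formula collapses to the classical $\Pi = X(X^\top D X)^{-1} X^\top D$, recovering~\eqref{eq canonical projection}.
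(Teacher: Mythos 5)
Your proposal is correct and follows essentially the same route as the paper: both reduce the $\Dnorm{\cdot}$-weighted least-squares problem to an ordinary one via $\Dnorm{Xw-v}^2 = \norm{D^{1/2}Xw - D^{1/2}v}^2$ and then invoke the minimum-norm characterization of the pseudo-inverse (the paper cites Theorems 23.1 \& 23.2 of \citet{gallier2019algebra}, which is exactly the fact you propose to cite or prove via the SVD).
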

The proof is provided in Appendix~\ref{proof: projection matrix},
where we have used standard results from least squares, see, e.g., \citet{gallier2019algebra}.
Moreover,
as expected,
this new definition of $\Pi$ preserves the desired contraction property.
\begin{lemma}
  \label{lemma: contraction operator}
  Let Assumption~\ref{assumption: ergodicity} hold. 
  Then, $\Pi\bop$ is a contraction operator w.r.t. $\norm{\cdot}_D$.
\end{lemma}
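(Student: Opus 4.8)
The plan is to decompose $\Pi\bop$ and exploit two classical facts: that $\bop$ is a $\gamma$-contraction in $\Dnorm{\cdot}$, and that the redefined projection $\Pi$ is non-expansive in $\Dnorm{\cdot}$. Since the composition of a non-expansive map with a $\gamma$-contraction is again a $\gamma$-contraction, the claim follows once both pieces are in hand. The whole argument is classical except for one point that the new setting forces us to revisit, namely the non-expansiveness of the min-norm projection; that is where I would concentrate the effort.

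First I would establish the contraction of $\bop$. For any $v_1, v_2 \in \R[\ns]$ we have $\bop v_1 - \bop v_2 = \gamma P_\pi(v_1 - v_2)$, so it suffices to show $\Dnorm{P_\pi v} \le \Dnorm{v}$ for all $v$. This is the standard estimate: writing $\Dnorm{P_\pi v}^2 = \sum_s \mu(s)\left(\sum_{s'} P_\pi(s,s') v(s')\right)^2$ and applying Jensen's inequality to the convex map $y \mapsto y^2$ against the probability row $P_\pi(s,\cdot)$, then using stationarity $\mu^\top P_\pi = \mu^\top$, yields $\Dnorm{P_\pi v}^2 \le \sum_{s'}\mu(s') v(s')^2 = \Dnorm{v}^2$. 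Hence $\Dnorm{\bop v_1 - \bop v_2} \le \gamma \Dnorm{v_1 - v_2}$.

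The main work is the non-expansiveness of the redefined $\Pi$, and this is the step that demands care in the absence of linear independence: the minimum-norm tie-breaking in~\eqref{eq new projection} changes which weight $w$ is returned, but it does not change the vector $Xw$, which remains the unique $\Dnorm{\cdot}$-closest point of $\text{colsp}(X)$ to $v$. I would make this rigorous through the closed form of Lemma~\ref{lemma: projection matrix} rather than reasoning directly about the (non-unique) minimizing weights. Observe that $u \mapsto D^{1/2} u$ is an isometry from $(\R[\ns], \Dnorm{\cdot})$ onto $(\R[\ns], \norm{\cdot})$, and that $D^{1/2}\Pi D^{-1/2} = (D^{1/2}X)(D^{1/2}X)^\dagger =: M$. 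The matrix $M$ is the Euclidean orthogonal projector onto $\text{colsp}(D^{1/2}X)$; it is symmetric and idempotent, hence $\norm{M} = 1$ (or $0$ in the degenerate case $X=0$). Therefore, for every $v$, $\Dnorm{\Pi v} = \norm{D^{1/2}\Pi v} = \norm{M D^{1/2} v} \le \norm{D^{1/2} v} = \Dnorm{v}$, so $\Pi$ is non-expansive in $\Dnorm{\cdot}$.

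Combining the two pieces, for any $v_1, v_2$ I obtain $\Dnorm{\Pi\bop v_1 - \Pi\bop v_2} \le \Dnorm{\bop v_1 - \bop v_2} \le \gamma\Dnorm{v_1 - v_2}$ with $\gamma \in [0,1)$, which establishes that $\Pi\bop$ is a contraction. I expect the only genuinely subtle point to be justifying that the min-norm selection preserves the orthogonal-projection interpretation; invoking the pseudo-inverse form from Lemma~\ref{lemma: projection matrix} together with the symmetric-idempotent structure of $M$ sidesteps any need to argue about the minimizing weights themselves, and the rest reduces to the standard Tsitsiklis–Van Roy computation.
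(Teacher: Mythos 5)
Your proof is correct and follows essentially the same route as the paper: both decompose $\Pi\bop$ into the classical $\gamma$-contraction of $\bop$ and the non-expansiveness of $\Pi$, with the latter obtained from the closed form $\Pi = X(D^{1/2}X)^\dagger D^{1/2}$ of Lemma~\ref{lemma: projection matrix} and the fact that $(D^{1/2}X)(D^{1/2}X)^\dagger$ is an orthogonal projector of norm at most one (the paper isolates this as Lemma~\ref{lemma: AA norm}). The only differences are presentational: you prove the Bellman contraction inline via Jensen's inequality where the paper cites Lemma~4 of \citet{tsitsiklis1997analysis}, and you invoke symmetry-plus-idempotence where the paper's lemma uses the SVD.
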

The proof is provided in Appendix~\ref{proof: contraction operator}.
Banach's fixed point theorem ensures that $\Pi\bop$ adopts a unique fixed point,
referred to as $v_*$,
such that
\begin{align}
  \label{eq v star}
  \textstyle\Pi\bop v_* = v_*.
\end{align}
Here, we have overloaded the definition of $v_*$ in~\eqref{eq canonical v star} 
and in the rest of the section we shall use the overloaded definition of $v_*$.
The definition of $\Pi$ in~\eqref{eq new projection} then immediately ensures that there exists at least one $w_*$, such that 
\begin{align}
  \label{eq w star}
  \textstyle Xw_* = v_*,
\end{align}
implying $\Pi \bop Xw_* = Xw_*$.

The next lemma mirrors Lemma~\ref{lemma: v = v' a.e. iff w' in W} in the case of finite state space.
The only difference is that we now have exact equivalence instead of almost everywhere equivalence for the approximated value functions.
\begin{lemma}
  \label{lemma: value equivalence for TD fixed points}
  Let Assumption~\ref{assumption: ergodicity} hold.
  Then, for any $w, w' \in \fW_*$, 
  we have $Xw = Xw'$.
\end{lemma}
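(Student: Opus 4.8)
The plan is to reduce the statement to the kernel inclusion $\ker A \subseteq \ker X$ and then exploit the strict negativity of the quadratic form induced by $D(\gamma P_\pi - I)$ on $\R[\ns]$.

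First, suppose $w, w' \in \fW_*$. By definition both solve the linear system, so $Aw + b = 0$ and $Aw' + b = 0$; subtracting gives $A(w - w') = 0$. Writing $u \doteq w - w'$, it therefore suffices to prove that $Au = 0$ implies $Xu = 0$, since the latter is exactly $Xw = Xw'$. In other words, the whole content of the lemma is that every element of $\ker A$ also lies in $\ker X$, even though $A$ is only negative semi-definite once linear independence is dropped.

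Next, I would left-multiply $Au = 0$ by $u^\top$ to obtain $u^\top A u = 0$, and substitute $A = X^\top D(\gamma P_\pi - I) X$. Setting $y \doteq Xu \in \R[\ns]$, this reads $y^\top D(\gamma P_\pi - I) y = 0$. The goal is now to argue that this scalar can vanish only when $y = 0$, for then $Xu = y = 0$ and we are done.

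The heart of the argument is the estimate $y^\top D(\gamma P_\pi - I) y \le (\gamma - 1)\Dnorm{y}^2$. Expanding, $y^\top D(\gamma P_\pi - I) y = \gamma \Dinner{y}{P_\pi y} - \Dnorm{y}^2$, and by Cauchy--Schwarz in the $D$-inner product together with the non-expansiveness $\Dnorm{P_\pi y} \le \Dnorm{y}$ (the standard consequence of $\mu$ being the stationary distribution of $P_\pi$), one has $\Dinner{y}{P_\pi y} \le \Dnorm{y}\Dnorm{P_\pi y} \le \Dnorm{y}^2$. Since $\gamma \in [0,1)$, the form is strictly negative for every $y \neq 0$, so $y^\top D(\gamma P_\pi - I) y = 0$ forces $y = 0$, i.e.\ $Xu = 0$ and hence $Xw = Xw'$. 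The main obstacle is establishing the $D$-non-expansiveness of $P_\pi$; everything else is routine linear algebra. Because that same fact already underlies the contraction property in Lemma~\ref{lemma: contraction operator}, I would reuse it rather than reprove it here.
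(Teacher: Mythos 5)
Your proposal is correct and takes essentially the same route as the paper's proof: subtract the two fixed-point equations to get $A(w - w') = 0$, contract with $(w - w')^\top$ to make the quadratic form $(X(w-w'))^\top D(\gamma P_\pi - I) X(w-w')$ vanish, and conclude $X(w-w') = 0$ from the negative definiteness of $D(\gamma P_\pi - I)$. The only difference is that the paper simply cites this negative definiteness from \citet{sutton2016emphatic}, whereas you re-derive it inline via Cauchy--Schwarz and the $D$-norm non-expansiveness of $P_\pi$, which is a valid (and self-contained) way to supply the same key fact.
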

The proof is provided in Appendix~\ref{proof: value equivalence for TD fixed points}.
Having redefined $\Pi$ and established the equivalence between value estimates for TD fixed points,
we are able to relate $\fW_*$ to the MSPBE with the following theorem.
\begin{theorem}
\label{thm: td fixed point and mspbe}
  Let Assumption~\ref{assumption: ergodicity} hold.
  Then, $\forall w\in\R[d]$,
  $Aw + b = 0 \iff \Pi\bop Xw = Xw$.
\end{theorem}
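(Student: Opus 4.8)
The plan is to reduce both sides of the claimed equivalence to a single $D$-orthogonality condition, and then show each side is equivalent to it. First I would expand the left-hand side using the definitions $A = X^\top D(\gamma P_\pi - I)X$, $b = X^\top D r_\pi$, and $\bop v = r_\pi + \gamma P_\pi v$. Collecting the common factor $X^\top D$ gives
\begin{align}
  Aw + b = X^\top D\qty((\gamma P_\pi - I)Xw + r_\pi) = X^\top D(\bop Xw - Xw),
\end{align}
so that $Aw + b = 0$ is \emph{exactly} the statement $X^\top D(\bop Xw - Xw) = 0$. The whole theorem therefore reduces to proving, with $v \doteq \bop Xw$, the equivalence $X^\top D(v - Xw) = 0 \iff \Pi v = Xw$.

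The key step, and the one I expect to be the main obstacle, is establishing the orthogonality (normal-equation) property of the generalized projection: for every $v \in \R[\ns]$,
\begin{align}
  X^\top D(v - \Pi v) = 0.
\end{align}
I would derive this directly from the variational definition~\eqref{eq new projection}: any minimizer $\hat w$ of the $D$-weighted least-squares objective $\Dnorm{Xw - v}^2$—in particular the minimum-norm one selected by $\Pi$—must satisfy the first-order optimality condition $X^\top D(X\hat w - v) = 0$, which is precisely $X^\top D(\Pi v - v) = 0$. (Alternatively, this would follow from the closed form $\Pi = X(D^{1/2}X)^\dagger D^{1/2}$ of Lemma~\ref{lemma: projection matrix} together with the Moore--Penrose identity $B^\top B B^\dagger = B^\top$ for $B \doteq D^{1/2}X$.) The delicate point is that, with arbitrary features, $X^\top D X$ need not be invertible and the minimizer $\hat w$ is not unique; what saves the argument is that \emph{every} minimizer satisfies the same normal equations, so the min-norm selection does not disturb this identity.

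With orthogonality in hand, both directions become short. For the $(\Leftarrow)$ direction I would take $v = \bop Xw$: if $\Pi \bop Xw = Xw$, the identity immediately yields $X^\top D(\bop Xw - Xw) = X^\top D(v - \Pi v) = 0$, hence $Aw + b = 0$. For the $(\Rightarrow)$ direction I would assume $X^\top D(\bop Xw - Xw) = 0$ and subtract the orthogonality identity $X^\top D(\bop Xw - \Pi \bop Xw) = 0$ to obtain $X^\top D(\Pi \bop Xw - Xw) = 0$. Since $\Pi \bop Xw \in \Colsp{X}$ by the form of $\Pi$ and $Xw \in \Colsp{X}$ trivially, I can write $\Pi \bop Xw - Xw = Xz$ for some $z \in \R[d]$; then $X^\top D Xz = 0$ forces $z^\top X^\top D Xz = \Dnorm{Xz}^2 = 0$, and the positive definiteness of $D$ gives $Xz = 0$, i.e.\ $\Pi \bop Xw = Xw$. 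The one place where classical arguments invoke invertibility of $X^\top D X$ is thus replaced by this positive-definiteness/kernel argument, which is the main thing to get right in the rank-deficient setting.
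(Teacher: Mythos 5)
Your proposal is correct, and its overall shape differs from the paper's proof in a meaningful way. For the direction $\Pi\bop Xw = Xw \implies Aw + b = 0$, you and the paper do essentially the same thing: your normal-equation identity $X^\top D(v - \Pi v) = 0$ is exactly what the paper derives inline from the closed form of Lemma~\ref{lemma: projection matrix} via the pseudo-inverse identities $BB^\dagger B = B$ and $B^\top B B^\dagger = B^\top$ with $B = D^{1/2}X$; you merely isolate it as a reusable fact (and your first-order-optimality derivation is a valid alternative, since every minimizer of the convex quadratic $\Dnorm{Xw - v}^2$ satisfies the normal equations, so the min-norm selection is immaterial). The genuine divergence is in the converse direction $Aw + b = 0 \implies \Pi\bop Xw = Xw$. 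The paper proves it indirectly: it invokes the contraction property of $\Pi\bop$ (Lemma~\ref{lemma: contraction operator}) and Banach's fixed point theorem to produce one $w_*$ with $\Pi\bop Xw_* = Xw_*$, uses the already-proved forward direction to place $w_*$ in $\fW_*$, and then uses Lemma~\ref{lemma: value equivalence for TD fixed points} (which rests on negative definiteness of $D(\gamma P_\pi - I)$) to transfer the fixed-point property to every element of $\fW_*$. Your argument is instead purely linear-algebraic: subtracting the normal equations from the hypothesis gives $X^\top D(\Pi\bop Xw - Xw) = 0$, and since $\Pi\bop Xw - Xw = Xz$ lies in the column space of $X$, the identity $\Dnorm{Xz}^2 = 0$ together with positive definiteness of $D$ forces $Xz = 0$. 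What your route buys is self-containedness and generality: it uses Assumption~\ref{assumption: irreducible} only through $D \succ 0$, never the contraction property, the discount factor, or negative definiteness of $D(\gamma P_\pi - I)$, so it would work verbatim with any affine operator in place of $\bop$, and it frees the theorem from depending on Lemmas~\ref{lemma: contraction operator} and~\ref{lemma: value equivalence for TD fixed points}. What the paper's route buys is narrative economy within its own development: it reuses machinery ($v_*$ and value equivalence) that it needs later anyway, and it directly exhibits $v_*$ as the common value of all TD fixed points.
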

The proof is provided in Appendix~\ref{proof: td fixed point and mspbe}.
The above equivalence implies that the $w_*$ defined in~\eqref{eq w star} must satisfy $Aw_* + b = 0$, i.e., $w_* \in \fW_*$.
It also confirms that all weights in $\fW_*$ minimize MSPBE.

\section{Related Work}
The seminal work of~\citet{sutton1988learning} formalizes the idea of temporal difference learning.
The linear TD update~\eqref{eqn: linear td update} in this paper is referred to as TD(0) in~\citet{sutton1988learning},
which is a special case of the more general TD algorithm with eligibility trace,
referred to as TD($\lambda$) in~\citet{sutton1988learning}.
\citet{sutton1988learning} proves the convergence of linear TD(0) in expectation.
Extending Sutton's work, 
\citet{dayan1992tdlambda} proves the convergence of linear TD($\lambda$) in expectation for a general $\lambda$ and the almost sure convergence of tabular TD(0).
Later, \citet{dayan1994tdlambda} further show the almost sure convergence of linear TD($\lambda$).
One should note that the works of~\citet{sutton1988learning,dayan1994tdlambda} 
require the observations to be linearly independent, i.e., the feature matrix $X$ has full row rank; in most cases, this is an even stronger assumption than having linearly independent columns, because full row rank is essentially equivalent to a tabular representation.
Furthermore, the version of TD($\lambda$)~\citet{sutton1988learning,dayan1992tdlambda,dayan1994tdlambda} 
consider is ``semi-offline'', where the weight update occurs after each sequence of observations rather than at every step.
\citet{tsitsiklis1997analysis} provide the first proof of almost sure convergence of linear TD($\lambda$), assuming linearly independent features. 
The linear TD considered in this paper is exactly the same as \cite{tsitsiklis1997analysis} with $\lambda = 0$.
\citet{tadic2001convergence} proves the almost sure convergence of linear TD with weaker assumptions than \citet{tsitsiklis1997analysis} but still requires linearly independent features.
\citet{tadic2001convergence} argues (without concrete proof) that without the linear independence assumption,
the projected iterates $\qty{\Gamma w_t}$ converges almost surely,
where $\qty{w_t}$ are generated by~\eqref{eqn: linear td update} and $\Gamma$ projects a vector into the row (column) space of $X^\top DX$ as $\Gamma(w) = \arg\min_{w' \in \qty{X^\top DX z | z \in \R[d]}} \norm{w' - w}^2 
= X^\top DX(X^\top DX)^\dagger w$.
Without assuming linearly independent features,
the convergence of $\qty{\Gamma w_t}$ does not necessarily imply the convergence of $\qty{Xw_t}$.
More recently, \citet{brandfonbrener2020geometric} sought to extend the convergence guarantee of linear TD to nonlinear function approximators.
They study the associated ODE of nonlinear function approximators and show that, under certain homogeneous assumptions of the function approximator, the $\liminf$ of the norm of the approximated value function is finite.
Under stricter assumptions between the gradient of the function approximator and the reversibility\footnote{Reversibility measures how symmetric the matrix $D\qty(I - \gamma P)$ is.} of the transition dynamics, the function approximator converges to the true value function.
\citet{cai2024neural} provide a finite-sample analysis of TD with a two-layer overparameterized neural network.
Neural TD, their proposed TD algorithm, requires a projection operator that confines the network's weights to a ball centered on the initial weights and uses weight averaging.
Under those modifications,
\citet{cai2024neural} establish the value convergence rate of Neural TD in expectation,
including an error term that only diminishes when the width of the network goes to infinity.
Despite working only with linear function approximation, we do not modify the original linear TD algorithm and still provide almost sure convergence.
We envision that our results will shed light on an almost sure convergence of Neural TD,
which we leave for future work.

The assumptions made about the features are not exclusive to the asymptotic analysis of linear TD.
They are also prevalent in the study of finite-sample guarantees.
Recent works on characterizing the convergence rate of linear TD include~\citet{bhandari2018finite,lakshminarayanan2018linear,srikant2019finite,chen2025concentration,mitra2025finite}.
All their analyses rely on the linear independence of the features.
As a matter of fact, almost all previous analysis of RL algorithm with linear function approximation assume feature linear independence, see, e.g.,~\citet{sutton2008gtd,sutton2009fast,maei2011gradient,hackman2013faster,liu2015finite,yu2015convergence,yu2016weak,zou2019finite,yang2019provably,zhang2020gradientdice,zhang2020provably,xu2020improving,xu2020non,wu2020finite,chen2021finitesample,yang2021onconvergence,qiu2021finite,zhang2021average,zhang2021breaking,xu2021doubly,zhang2022global,zhang2022truncated,zhang2023convergence,chen2023global,dalfabbro2024finite,wang2024finitetime,ganesh2025orderoptimal,liu2025ode,qian2025revisiting,maity2025adversariallyrobust,peng2025afinitesample,liu2025linearq}.
The only exception we are aware of is~\citet{xie2025finite}, who build on our results and conduct a finite-sample analysis of linear TD under arbitrary features to establish $L^2$ convergence rates for linear TD in both discounted and average-reward settings.

\section{Conclusion}
This work contributes to RL with arbitrary features using linear TD as an example, 
where the commonly used linear independence assumption on features is lifted.
The insight and techniques in this work can be easily used to analyze other linear RL algorithms,
e.g.,
linear SARSA \citep{rummery1994line,zou2019finite,zhang2023convergence},
gradient TD methods \citep{sutton2008gtd,sutton2009fast,maei2011gradient,zhang2021average,qian2025revisiting},
emphatic TD methods \citep{yu2015convergence,sutton2016emphatic,zhang2022truncated}, 
density ratio learning methods \citep{nachum2019dualdice,zhang2020gradientdice},
TD with target networks \citep{lee2019target,carvalho2020new,zhang2021breaking},
linear $Q$-learning \citep{meyn2024projected,liu2025linearq,liu2025extensions},
and
actor-critic methods with compatible features \citep{sutton2000policy,konda2000actor,zhang2020provably},
as well as their in-context learning version \citep{wang2025transformers,wang2025towards}. 
The mode of convergence may also be extended to high probability convergence with exponential tails (and thus $L^p$ convergence) following the techniques in~\citet{chen2025concentration,qian2024almost}. 
This work is also closely related to overparameterized neural networks,
where the linearization of the neural network at the initial weights naturally results in features that are not necessarily linearly independent \citep{cai2024neural}.
Recently,
the convergence of linear TD with linearly independent features has been formally verified in Lean \citep{zhang2025towards}.
We envision that the convergence of linear TD with arbitrary features can also be formally verified in Lean based on the results in this paper.

That said, the major open question is whether we can prove that the linear TD weight iterates converge to a possibly sample-path-dependent TD fixed point.
We are optimistic about this question because recent work \citet{blaser2026asymptotic} show that tabular average-reward TD can converge to a sample-path-dependent TD fixed point almost surely.
We are, however, pessimistic about solving this problem with the ODE technique.
Instead, recent works on stochastic Krasnoselskii-Mann iterations based on a fox-and-hare model \citep{cominetti2014rate,bravo2019rates,bravo2024stochastic,blaser2026asymptotic} may be a promising direction to solve this problem.

\acks{This work is supported in part by the US National Science Foundation under the awards III-2128019, SLES-2331904, and CAREER-2442098, the Commonwealth Cyber Initiative's Central Virginia Node under the award VV-1Q26-001, and a Cisco Faculty Research Award.}

\appendix

\section{Mathematical Background}
We first provide the definition of the Moore-Penrose pseudo-inverse for completeness.
\begin{xdefinition}
  (Definition 23.1 of \cite{gallier2019algebra})
  Given any nonzero $m \times n$ matrix $A$ of rank $r$, if $A = V \Sigma U^\top$ is a singular value decomposition of $A$ such that
  $\Sigma = \mqty[\Lambda & 0_{r, n-r}\\
                   0_{m-r, r} & 0_{m-r, n-r}], \text{where } 
    \Lambda = \mqty[\dmat{\lambda_1, \ddots, \lambda_r}]$
  is an $r \times r$ diagonal matrix consisting of the nonzero singular values of $A$, then if we let $\Sigma^\dagger$ be the $n \times m$ matrix
  $\Sigma^\dagger = \mqty[\Lambda^{-1} &  0_{r, m-r}\\
    0_{n-r, r} & 0_{n-r, m-r}], \text{where }
    \Lambda^{-1} = \mqty[\dmat{1/\lambda_1, \ddots, 1/\lambda_r}]$,
  the pseudo-inverse of $A$ is defined as $A^\dagger = U \Sigma^\dagger V^\top$.
  If $A = 0_{m,n}$ is the zero matrix, the pseudo-inverse of $A$ is defined as $A^\dagger = 0_{n,m}$.
 \end{xdefinition}

\begin{xtheorem}
  \label{thm: least squares}
  (Theorems 23.1 \& 23.2 of \cite{gallier2019algebra})
  For any matrix $A$ and any vector $b$,
  consider the least square problem $\min_x \norm{Ax - b}$.
  Then, $x^\dagger \doteq A^\dagger b$ is a minimizer.
  Furthermore, any other possible minimizer has a strictly larger norm than $x^\dagger$,
  i.e., $x^\dagger$ is the unique minimizer of the problem
  $\min\qty{\norm{x} \middle| \norm{Ax - b} = \min_{y} \norm{Ay - b}}$.
\end{xtheorem}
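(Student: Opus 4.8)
The plan is to prove the statement directly from the singular value decomposition $A = V\Sigma U^\top$ used in the definition of the pseudo-inverse, exploiting that the orthogonal factors $V$ and $U$ leave the $\ell_2$ norm invariant. First I would write, for any $x$,
\begin{align}
  \norm{Ax - b} = \norm{V\Sigma U^\top x - b} = \norm{\Sigma U^\top x - V^\top b},
\end{align}
where the second equality applies $\norm{Vz} = \norm{z}$ with $z = \Sigma U^\top x - V^\top b$. Introducing the change of variables $y \doteq U^\top x$ and $c \doteq V^\top b$, the problem becomes $\min_y \norm{\Sigma y - c}$, and since $U$ is orthogonal we have $\norm{x} = \norm{y}$, so minimizing both the residual and the norm over $x$ is equivalent to doing so over $y$.

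Next I would exploit the block structure of $\Sigma$. Writing $y = (y_1, \dots, y_n)$ and $c = (c_1, \dots, c_m)$, the vector $\Sigma y$ has $i$-th entry $\lambda_i y_i$ for $i \le r$ and $0$ for $r < i \le m$, while the coordinates $y_{r+1}, \dots, y_n$ do not appear at all. Hence
\begin{align}
  \norm{\Sigma y - c}^2 = \sum_{i=1}^r (\lambda_i y_i - c_i)^2 + \sum_{i=r+1}^m c_i^2.
\end{align}
The second sum is an irreducible constant, while the first is minimized (to zero) precisely when $y_i = c_i / \lambda_i$ for $i = 1, \dots, r$, with $y_{r+1}, \dots, y_n$ free. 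This shows that $y^\dagger \doteq \Sigma^\dagger c$ (i.e. $y_i = c_i/\lambda_i$ for $i \le r$ and $y_i = 0$ otherwise) is a minimizer; translating back yields $x^\dagger = U y^\dagger = U \Sigma^\dagger V^\top b = A^\dagger b$, establishing the first claim.

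For the minimum-norm claim I would observe that any minimizer $y$ must satisfy $y_i = c_i/\lambda_i$ for $i \le r$, since these coordinates are pinned by the requirement of a minimal residual, whereas $y_{r+1}, \dots, y_n$ remain arbitrary. Therefore
\begin{align}
  \norm{y}^2 = \sum_{i=1}^r (c_i/\lambda_i)^2 + \sum_{i=r+1}^n y_i^2 \ge \sum_{i=1}^r (c_i/\lambda_i)^2 = \norm{y^\dagger}^2,
\end{align}
with equality if and only if $y_{r+1} = \dots = y_n = 0$, i.e. $y = y^\dagger$. Since $\norm{x} = \norm{y}$ under the orthogonal change of variables, $x^\dagger = A^\dagger b$ is the unique minimum-norm minimizer. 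The degenerate case $A = 0$ (where $r = 0$) is handled trivially: every $x$ yields residual $\norm{b}$, so the unique minimum-norm minimizer is $x = 0 = A^\dagger b$.

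I do not expect a serious obstacle, as this is a classical consequence of the SVD; the only point requiring care is the index bookkeeping in the block decompositions of $\Sigma$ and $\Sigma^\dagger$ — in particular keeping the free coordinates $y_{r+1}, \dots, y_n$, which alone control the excess norm, distinct from the residual-only coordinates $c_{r+1}, \dots, c_m$ — together with the separate handling of the zero-matrix edge case.
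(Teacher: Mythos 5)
Your proof is correct. The paper does not prove this statement at all — it imports it verbatim from Gallier (Theorems 23.1 \& 23.2 of the cited text) as background — and your argument (orthogonal change of variables via the SVD, splitting the residual into pinned coordinates $y_i = c_i/\lambda_i$ for $i \le r$ and free coordinates $y_{r+1},\dots,y_n$ whose vanishing characterizes the minimum-norm solution, plus the separate zero-matrix case) is precisely the standard proof given in that reference, so your proposal matches the intended derivation exactly.
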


\begin{xlemma}
  \label{lemma: AA norm}
  For any matrix $A \in \R[m \times n]$ of rank $r$, $\norm{A A^\dagger} \le 1$.
\end{xlemma}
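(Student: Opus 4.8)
The plan is to compute $AA^\dagger$ directly from the singular value decomposition used to define the pseudo-inverse, and then exploit the fact that orthogonal conjugation preserves the operator norm. First I would handle the trivial case $A = 0_{m,n}$ separately: by definition $A^\dagger = 0_{n,m}$, so $AA^\dagger = 0$ and $\norm{AA^\dagger} = 0 \le 1$. For the remaining case $r \ge 1$, I would write a singular value decomposition $A = V\Sigma U^\top$ with $V, U$ orthogonal, so that by the definition above $A^\dagger = U\Sigma^\dagger V^\top$.

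Next I would carry out the key computation. Using $U^\top U = I_n$ (orthogonality of $U$), one obtains
\begin{align}
  AA^\dagger = V\Sigma U^\top U \Sigma^\dagger V^\top = V \Sigma\Sigma^\dagger V^\top.
\end{align}
The point is to identify the middle factor. Multiplying the block forms of $\Sigma$ and $\Sigma^\dagger$ gives $\Lambda\Lambda^{-1} = I_r$ in the top-left block and zeros everywhere else, so
\begin{align}
  \Sigma\Sigma^\dagger = \mqty[I_r & 0_{r, m-r}\\ 0_{m-r, r} & 0_{m-r, m-r}] \in \R[m\times m],
\end{align}
which is a (symmetric, idempotent) orthogonal projection matrix.

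Finally I would conclude using invariance of the induced norm under orthogonal conjugation. Since $V$ is orthogonal, $\norm{V M V^\top} = \norm{M}$ for every $M$, so $\norm{AA^\dagger} = \norm{\Sigma\Sigma^\dagger}$. The matrix $\Sigma\Sigma^\dagger$ is diagonal with entries in $\qty{0, 1}$ and at least one entry equal to $1$ (because $r \ge 1$), hence its induced $\ell_2$ norm is exactly $1$. This yields $\norm{AA^\dagger} = 1 \le 1$, completing the argument.

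I do not anticipate a genuine obstacle here: the statement is essentially a structural consequence of the SVD characterization of $A^\dagger$. The only places requiring care are purely bookkeeping: getting the block dimensions of $\Sigma$ and $\Sigma^\dagger$ to line up so that $\Sigma\Sigma^\dagger$ is the stated $m \times m$ projection, and remembering to treat the zero matrix as a separate edge case so that the claim $\norm{AA^\dagger} \le 1$ holds with equality replaced by $0$ rather than $1$.
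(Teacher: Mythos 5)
Your proof is correct and follows essentially the same route as the paper: both compute $AA^\dagger = V Q V^\top$ with $Q = \Sigma\Sigma^\dagger$ the rank-$r$ diagonal projection via the SVD, then bound the norm (the paper via submultiplicativity $\norm{V}\norm{Q}\norm{V^\top} \le 1$, you via invariance under orthogonal conjugation, which are interchangeable here). Your explicit handling of the zero-matrix edge case is a minor refinement the paper omits but does not change the substance of the argument.
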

\begin{proof}
  This follows immediately from the fact that $AA^\dagger$ is an orthogonal projection onto the range of $A$ (see, e.g., Proposition 23.4 of \cite{gallier2019algebra})
  and the well-known fact that the operator norm of an orthogonal projection is $1$.
  We also provide a short proof for completeness.
  Let $A = V \Sigma U^\top$ be the singular value decomposition. 
  We then have
  $\norm{AA^\dagger} = \norm{V \Sigma U^\top (U \Sigma^\dagger V^\top)} = \norm{V Q V^\top},
  \text{where } 
  Q = \mqty[I_r & 0_{r, m-r} \\
            0_{m-r, r} & 0_{m-r, m-r}]$.
  It then follows easily that 
  $\norm{AA^\dagger} \le \norm{V} \norm{Q} \norm{V^\top} \leq 1$.
\end{proof}
We then list supporting theorems from~\citet{khalil2002nonlinear} about nonlinear systems.
\begin{xtheorem}
  \label{thm: globally asymptotically stable}
  (Theorem 4.2 of~\citet{khalil2002nonlinear})
  Let $z = z_*$ be an equilibrium point for~\eqref{eq shifted ode}.
  Let $U: \R[d] \to \R$ be a continuously differentiable function such that
  \begin{enumerate}
    \item $U(z_*) = 0$ and $U(z) > 0, \forall z \neq z_*$;
    \item $\norm{z} \to \infty \implies U(z) \to \infty$;
    \item $\dv{U(z)}{t} < 0, \forall z \neq z_*$,
  \end{enumerate}
  then $z = z_*$ is globally asymptotically stable.
\end{xtheorem}

\subsection{Stochastic Approximation}
We then present some general results in stochastic approximation from \citet{benveniste1990MP} and \citet{liu2025ode}.
Consider the iterates $\qty{w_t}$ in $\R[d]$ generated by
\begin{align}
\label{eq: SA general form}
  \textstyle
  w_{t+1} = w_t + \alpha_t H(w_t, Y_t),
\end{align}
where $\qty{Y_t}$ is a Markov chain evolving in $\R[k]$ and the function $H$ maps from $\R[d] \times \R[k]$ to $\R[d]$.
We use $P_\fY: \fB\qty(\R[k]) \times \R[k] \to [0, 1]$ to denote the transition kernel of $\qty{Y_t}$.

We first present a result from \cite{benveniste1990MP} concerning the stability of $\qty{w_t}$.
We note that \cite{benveniste1990MP} considers a time-inhomogeneous Markov chain,
but our $\qty{Y_t}$ is time-homogeneous.
Therefore, some assumptions in \cite{benveniste1990MP} trivially hold in our setting.
For simplicity, we list only the nontrivial assumptions here.

\begin{xassumption}
  \label{assumption: learning rate sa}
    $\{\alpha_t\}$ is a decreasing sequence of positive real numbers such that
    \begin{align}
      \textstyle 
      \sum_{t=0}^\infty \alpha_t = \infty,
      \quad\sum_{t=0}^\infty \alpha_t^2 < \infty,
      \quad\lim_{t\to\infty} \qty(\frac{1}{\alpha_{t+1}} - \frac{1}{\alpha_t}) < \infty.    
    \end{align}
  \end{xassumption}
  
  \begin{xassumption}
  \label{assumption: Y_t ergodic}
  The Markov chain $\qty{Y_t}$ admits a well-defined and unique stationary distribution $\eta: \fB(\R[k]) \to [0,1]$, such that $\eta(Z) = \int_{\R[k]} P_\fY(Z \mid y)\eta(\dd{y})$ for all $Z \in \fB(\R[k])$ and $\eta(U) > 0$ for all non-empty open sets $U \subseteq \R[k]$.
  \end{xassumption}
  
  \begin{xassumption}
  \label{assumption: h bounded}
  There exists a constant $K_\tref{assumption: h bounded} \in \R$, 
  such that for all $w \in \R[d]$ and $y \in \R[k]$, we have
  $\norm{H(w,y)} \le K_\tref{assumption: h bounded}(1 + \norm{w})(1+\norm{y})$.
  \end{xassumption}
  For any function $f(w, y)$ on $\R[d] \times \R[k]$, we shall denote the partial mapping $y \to f(w, y)$ by $f_w$.
  We shall also denote the function $y \to \int_{\R[k]} f(w, y') P_\fY(\dd{y'} \mid y)$
  as $P_\fY f_w$.
  \begin{xassumption}
  \label{assumption: g and nu exist}
    There exists a function $g: \R[d] \to \R[d]$, and for each $w \in \R[d]$ a function $\nu_w: \R[k] \to \R[d]$, such that
    \begin{enumerate}[(i)]
        \item $g$ is locally Lipschitz on $\R[d]$;
        \item $\nu_w - P_\fY\nu_w = H_w - g(w)$ for all $w \in \R[d]$;
        \item there exists a constant $K_\tref{assumption: g and nu exist} \in \R$, such that for all $w \in \R[d]$ and $y \in \R[k]$,
        $\norm{\nu_w(y)} \le K_\tref{assumption: g and nu exist} (1 + \norm{w}) (1 + \norm{y})$;
        \item there exists a constant $K'_\tref{assumption: g and nu exist} \in \R$, such that for all $w, w' \in \R[d]$ and $y \in \R[k]$,\\
        $\norm{(P_\fY\nu_w)(y) - (P_\fY\nu_{w'})(y)} \le K'_\tref{assumption: g and nu exist}\norm{w - w'}(1 + \norm{y})$.
    \end{enumerate} 
  \end{xassumption}
  \begin{xassumption}
    \label{assumption: bounded moment}
    For all $a \in \R[d]$, $q > 0$, and $n \ge 0$, there exists $K_\tref{assumption: bounded moment}(q) < \infty$, such that
    $\E\qty[1 + \norm{Y_{n+1}}^q \mid w_0 = a, Y_0 = y] \le K_\tref{assumption: bounded moment}(q) \qty(1 + \norm{y}^q)$.
  \end{xassumption}
   Define $h(w) \doteq \E_{Y \sim \eta}\qty[H(w, Y)]$.
   Then, we have the following theorem in~\cite{benveniste1990MP}.
  \begin{xtheorem}
    \label{theorem: w_t a.s. bounded conditions}
    (Theorem 17(a) of \cite{benveniste1990MP})
    Let Assumptions~\ref{assumption: learning rate sa} and \ref{assumption: h bounded} - \ref{assumption: bounded moment} hold.
    Assume there exists a function $U: \R[d] \to [0, \infty)$ of class $C^2$ with bounded second derivatives\footnote{A function of class $C^2$ is a function whose second derivative is continuous in its domain.}  and a constant $c > 0$, such that for all $w\in\R[d]$,
    \begin{enumerate}
        \item[i.] $\innerdot{\nabla_w U(w)}{h(w)} \le 0$, \text{and}
        \item[ii.] $U(w) \ge c\norm{w}^2$,
    \end{enumerate}
    then for all $w_0 \in \R[d]$, the iterates $\qty{w_t}$ generated by~\eqref{eq: SA general form} is stable, i.e.,
    $\sup_t \norm{w_t} < \infty \qq{a.s.}$
  \end{xtheorem}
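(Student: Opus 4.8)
The plan is to promote $U$ to an almost-supermartingale along the iterates and then invoke the Robbins--Siegmund almost-supermartingale convergence theorem, converting the resulting almost sure convergence of $U(w_t)$ into boundedness of $\qty{w_t}$ through the coercivity condition (ii). A preliminary observation streamlines everything: summing the Poisson equation $\nu_w(y) - (P\nu_w)(y) = H_w(y) - g(w)$ of Assumption~\ref{assumption: g exists}(ii) against the stationary distribution $\eta$ and using $\eta^\top P = \eta^\top$ makes the left-hand side vanish, so $g(w) = \E_{Y\sim\eta}\qty[H(w, Y)] = h(w)$. Hence the mean field coincides with $g$, and the centered noise admits the representation
\begin{align}
  \xi_t \doteq H(w_t, Y_t) - h(w_t) = \nu_{w_t}(Y_t) - (P\nu_{w_t})(Y_t).
\end{align}

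Next I would Taylor-expand $U$ along a single update. Since $U$ is $C^2$ with a uniformly bounded Hessian, say $\norm{\nabla^2 U} \le L$, and $w_{t+1} - w_t = \alpha_t H(w_t, Y_t)$, I get
\begin{align}
  U(w_{t+1}) \le U(w_t) + \alpha_t \innerdot{\nabla U(w_t)}{h(w_t)} + \alpha_t \innerdot{\nabla U(w_t)}{\xi_t} + \frac{L}{2}\alpha_t^2 \norm{H(w_t, Y_t)}^2.
\end{align}
Condition (i) kills the mean-field term, and Assumption~\ref{assumption: h bounded} bounds the last term by $C\alpha_t^2(1+\norm{w_t})^2$. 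A bounded Hessian also forces $\nabla U$ to be globally Lipschitz, hence $\norm{\nabla U(w)} \le C(1 + \norm{w})$, a linear-growth bound I would use repeatedly when controlling the noise term.

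The technical heart is the Markov-noise term $\alpha_t \innerdot{\nabla U(w_t)}{\xi_t}$, and this is where I expect the main obstacle. Since $w_t$ is determined by $\fF_{t-1} \doteq \sigma(w_0, Y_0, \dots, Y_{t-1})$ and $Y_t \sim P(Y_{t-1}, \cdot)$, the quantity $M_t \doteq \nu_{w_t}(Y_t) - (P\nu_{w_t})(Y_{t-1})$ is a martingale difference, and the representation of $\xi_t$ rearranges into
\begin{align}
  \xi_t = M_t + \qty[(P\nu_{w_t})(Y_{t-1}) - (P\nu_{w_t})(Y_t)].
\end{align}
The bracketed co-boundary is \emph{not} conditionally centered, so I would remove it by summation by parts, folding the accumulated $\alpha_t(P\nu_{w_t})(Y_t)$-type quantities into a modified Lyapunov function $\tilde U_t \doteq U(w_t) + r_t$ with $r_t$ dominated by $U(w_t)$. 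The correction and the residuals created by the rearrangement are summable because: (a) the increments $(P\nu_{w_{t+1}} - P\nu_{w_t})(Y_t)$ are controlled by the Lipschitz-in-$w$ bound on $P\nu_w$ in Assumption~\ref{assumption: g exists}(iv) together with $\norm{w_{t+1} - w_t} = O(\alpha_t(1+\norm{w_t}))$; (b) the step-size increments $\abs{\alpha_{t+1} - \alpha_t}$ sum finitely by monotonicity of $\qty{\alpha_t}$, with the regularity $\lim_t(\frac{1}{\alpha_{t+1}} - \frac{1}{\alpha_t}) < \infty$ in Assumption~\ref{assumption: learning rate sa} controlling the remaining second-order estimates; and (c) the linear-growth bound on $\nu_w$ in Assumption~\ref{assumption: g exists}(iii) keeps every factor at most linear in $\norm{w_t}$. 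After conditioning, the martingale-difference part drops, and what survives is a Robbins--Siegmund almost-supermartingale
\begin{align}
  \E\qty[\tilde U_{t+1} \mid \fF_t] \le (1 + \beta_t)\tilde U_t + \gamma_t, \qquad \sum_t \beta_t < \infty, \quad \sum_t \gamma_t < \infty,
\end{align}
where the factor $1 + \beta_t$ appears precisely because the order-$\alpha_t^2\norm{w_t}^2$ errors are absorbed into $U(w_t)$ via the coercivity $\norm{w}^2 \le U(w)/c$ of condition (ii).

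Finally I would apply the Robbins--Siegmund theorem to conclude that $\tilde U_t$, and therefore $U(w_t)$, converges almost surely to a finite random variable; in particular $\sup_t U(w_t) < \infty$ a.s., and condition (ii) then yields $\sup_t \norm{w_t}^2 \le \sup_t U(w_t)/c < \infty$ a.s. The genuine difficulty lies entirely in the third step: because $\xi_t$ is a controlled, state-dependent Markov noise, the Poisson corrector $\nu_{w_t}$ drifts with the iterate, so the co-boundary is not a martingale and the summation-by-parts must be organized so that every stray term is provably summable \emph{without} an a priori bound on $\norm{w_t}$. This is exactly why the linear-growth and Lipschitz bounds of Assumption~\ref{assumption: g exists}(iii)--(iv) and the step-size regularity in Assumption~\ref{assumption: learning rate sa} are indispensable, and why the statement is not a one-line corollary of Robbins--Siegmund.
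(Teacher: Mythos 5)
Your proposal cannot be compared against an in-paper proof because there is none: the paper imports this statement verbatim as Theorem 17(a) of \citet{benveniste1990MP}, and its only contribution around the statement is to note that several of the source's hypotheses become trivial for a time-homogeneous, finite-state chain. That said, your sketch is a correct reconstruction of how this result is actually proved in the cited source and in the stochastic-approximation literature generally: the perturbed-Lyapunov-function method, in which the Poisson equation of Assumption~\ref{assumption: g exists}(ii) splits the Markov noise into a martingale difference $\nu_{w_t}(Y_t)-(P\nu_{w_t})(Y_{t-1})$ plus a coboundary that is folded by summation by parts into a modified energy $\tilde U_t$, the growth and Lipschitz bounds of Assumptions~\ref{assumption: h bounded} and~\ref{assumption: g exists}(iii)--(iv) together with the step-size regularity of Assumption~\ref{assumption: learning rate sa} keep every residual summable after being absorbed multiplicatively through the coercivity $U(w)\ge c\norm{w}^2$, and Robbins--Siegmund finishes. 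Your observation that $g=h$ follows from summing the Poisson equation against $\eta$ is also correct and is exactly what makes condition (i) apply to the mean field. Two points you leave implicit and would need to tighten in a full write-up: first, the supermartingale inequality should be taken with respect to $\fF_{t-1}$, with $\tilde U_t$ constructed to be $\fF_{t-1}$-measurable (e.g., via $(P\nu_{w_t})(Y_{t-1})$), since $U(w_{t+1})$ is already $\fF_t$-measurable and conditioning on $\fF_t$ as displayed would drop nothing; second, Robbins--Siegmund requires the quantities to be nonnegative, so you must verify that the correction $r_t = O\qty(\alpha_t(1+U(w_t)))$ cannot make $\tilde U_t$ negative, which holds for $t$ large enough that $C\alpha_t < 1$ (or after adding a constant), but it must be stated.
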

We now present results from \cite{liu2025ode} concerning the convergence of $\qty{w_t}$. 
\begin{xassumption}
\label{assumption: L(y) existence}
  There exists a function $L: \fY \to \R$ such that for any $w, w', y$,\\
  $\norm{H(w,y) - H(w',y)} \le L(y)\norm{w - w'}$.
  Moreover, the following expectations are well-defined and finite for any $w \in \R[d]$:
  $h(w) \doteq \E_{Y \sim \eta} \qty[H(w ,Y)], L \doteq \E_{Y \sim \eta}\qty[L(Y)]$.
\end{xassumption}
\begin{xassumption}
\label{assumption: w_t bounded}
  For all $w_0 \in \R[d]$, the iterates $\qty{w_t}$ generated by~\eqref{eq: SA general form} is stable, i.e., $\sup_t \norm{w_t} < \infty$ a.s.
\end{xassumption}
\begin{xtheorem}
  \label{thm: shuze converge to bounded invariant set}
  (Corollary 8 of~\citet{liu2025ode})
  Let Assumptions \ref{assumption: learning rate sa}, \ref{assumption: Y_t ergodic}, \ref{assumption: L(y) existence} and \ref{assumption: w_t bounded} hold.
  Then the iterates $\qty{w_t}$ generated by~\eqref{eq: SA general form} converge almost surely to a (possibly sample path dependent) bounded invariant set of the ODE
  $\dv{w(t)}{t} = h(w(t))$.
\end{xtheorem}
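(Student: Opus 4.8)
The plan is to reconstruct the standard ODE method for stochastic approximation with Markovian noise, in the spirit of Kushner--Yin and Benveniste--M\'etivier--Priouret, specialized to the finite-state chain of Assumption~\ref{assumption: Y_t irreducible}. First I would tame the correlated noise via a Poisson equation. Writing $\tilde H_w(y) \doteq H(w,y) - h(w)$, the zero-mean property $\sum_y \eta(y)\tilde H_w(y) = 0$ together with irreducibility of the finite chain $\qty{Y_t}$ guarantees a solution $\nu_w$ to $(I-P)\nu_w = \tilde H_w$; concretely one may take $\nu_w = \sum_{k\ge 0} P^k \tilde H_w$, or the fundamental-matrix expression in the periodic case. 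The Lipschitz hypothesis in Assumption~\ref{assumption: L(y) existence} transfers to $\nu_w$, yielding a linear growth bound $\norm{\nu_w(y)} \le C(1+\norm{w})$ and a uniform Lipschitz bound in $w$ --- precisely the regularity recorded in Assumption~\ref{assumption: g exists}, which I would now derive rather than assume.

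Next I would decompose the update~\eqref{eq: SA general form} as $w_{t+1} = w_t + \alpha_t h(w_t) + \alpha_t \tilde H_{w_t}(Y_t)$ and substitute $\tilde H_{w_t}(Y_t) = \nu_{w_t}(Y_t) - (P\nu_{w_t})(Y_t)$. Adding and subtracting $\nu_{w_{t+1}}(Y_{t+1})$ and using $(P\nu_{w_t})(Y_t) = \E[\nu_{w_t}(Y_{t+1}) \mid \mathcal{F}_t]$ splits the noise into a martingale-difference term, a telescoping term, and a remainder controlled by $\norm{\nu_{w_{t+1}} - \nu_{w_t}} = O(\alpha_t)$. Under stability (Assumption~\ref{assumption: w_t bounded}) the iterates lie in a bounded set almost surely, so the linear growth of $\nu$ becomes a uniform bound; then $\sum_t \alpha_t^2 < \infty$ makes the martingale part converge a.s.\ by the martingale convergence theorem and makes the remainder summable, while $\lim_t(1/\alpha_{t+1}-1/\alpha_t)<\infty$ controls the weighted telescoping term. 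The conclusion of this step is that the accumulated noise over any finite time window vanishes asymptotically.

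Having disposed of the noise, I would pass to the continuous-time interpolation. Setting $t_n \doteq \sum_{i<n}\alpha_i$ and letting $\bar w(\cdot)$ be the piecewise-linear interpolation with $\bar w(t_n) = w_n$, boundedness of $\qty{w_t}$ together with the now-bounded increments gives equicontinuity of the time-shifted family $\qty{\bar w(\cdot + t_{n_k})}$ on compact intervals. Arzel\`a--Ascoli extracts a uniformly convergent subsequence, and the noise estimate from the previous step shows any limit $w(\cdot)$ satisfies $w(t) = w(0) + \int_0^t h(w(s))\dd{s}$, i.e.\ it is a genuine solution of the mean ODE $\dv{w(t)}{t} = h(w(t))$; shifting backward as well as forward yields a full solution on $(-\infty,\infty)$. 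Finally, let $\mathcal{L}$ be the a.s.\ nonempty, compact, bounded set of limit points of $\qty{w_t}$. The trajectory-limit argument shows that through every point of $\mathcal{L}$ passes an entire ODE solution remaining in $\mathcal{L}$, so $\mathcal{L}$ is invariant; boundedness of the iterates makes it a bounded invariant set, and $d(w_t, \mathcal{L}) \to 0$ a.s.\ is exactly the claimed convergence.

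The main obstacle is the Markovian, non-martingale noise: all the genuine work lies in constructing the Poisson solution $\nu_w$ with the right \emph{uniform} regularity for a possibly periodic finite chain and then verifying that every error term in the decomposition is asymptotically negligible a.s.\ using only square-summability of $\qty{\alpha_t}$ and the $1/\alpha$ regularity condition. A secondary subtlety is upgrading ``limit trajectories are ODE solutions'' to ``the full limit set is invariant,'' which requires a diagonal subsequence argument to follow an ODE solution through an arbitrary limit point in both time directions while keeping it inside $\mathcal{L}$.
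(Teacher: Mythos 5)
The paper does not prove this theorem itself: it is imported verbatim as Corollary 1 of \citet{liu2024ode}, whose proof (as the paper remarks in the surrounding text) is the standard Arzel\`a--Ascoli-based ODE-method argument in the style of Theorem 1, Chapter 5 of \citet{kushner2003stochastic}, with stability assumed directly so that the ODE@$\infty$ machinery is unnecessary. Your reconstruction --- Poisson-equation decomposition of the Markovian noise, martingale/telescoping control using $\sum_t \alpha_t^2 < \infty$ and boundedness of the iterates, piecewise interpolation plus Arzel\`a--Ascoli to show limit trajectories solve the mean ODE on $(-\infty,\infty)$, and invariance of the bounded limit set --- is exactly that argument, so it is correct and takes essentially the same approach as the cited proof.
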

We note that the original form of Corollary~8 of \cite{liu2025ode} involves some additional assumptions regarding $H_\infty(w, y) \doteq \lim_{c\to\infty} \frac{H(cw, y)}{c}$ and $h_\infty(w) \doteq \E_{Y\sim\eta}\left[H_\infty(w, Y)\right]$,
as well as assumptions regarding the ODE~$\dv{w(t)}{t} = h_\infty(w(t))$.
Those assumptions are related to the ODE@$\infty$ technique to establish the stability of the iterates $\qty{w_t}$.
We refer the reader to \citet{borkar2009stochastic,borkar2025ode,liu2025ode} for more details about this technique.
After establishing stability,
the convergence part in Corollary 8 of \cite{liu2025ode} follows a standard approach based on the Arzela-Ascoli theorem, akin to Theorem 1 of Chapter 5 of \cite{kushner2003stochastic} and does not rely on those additional assumptions. 
In this paper,
we instead assume stability directly (Assumption~\ref{assumption: w_t bounded}).
As a result,
we no longer need those ODE@$\infty$ related assumptions and thus omit them.
In this work,
instead of using the ODE@$\infty$ technique,
we will use the Lyapunov method in \citet{benveniste1990MP} to establish stability.

\section{Proofs in Section~\ref{section: background}}
\subsection{Proof of Lemma~\ref{lemma: pv norm}}
\label{proof: pv norm}
\begin{proof}
  By definition, we have
  \begin{align}
    \norm{P_\pi v}^2_\mu
    &= \textstyle\int_{s \in \fS} \qty((P_\pi v)(s))^2 \mu(\dd{s})
    = \textstyle\int_{s \in \fS} \qty(\int_{s' \in \fS} v(s') p(\dd{s'}|s))^2 \mu(\dd{s})\\
    &\le \textstyle\int_{s \in \fS} \int_{s' \in \fS} v(s')^2 p(\dd{s'}|s) \mu(\dd{s}) \explain{Jensen's inequality}\\
    &=\textstyle\int_{s' \in \fS} v(s')^2 \int_{s \in \fS}   p(\dd{s'}|s) \mu(\dd{s})
    =\textstyle\int_{s' \in \fS} v(s')^2 \mu(\dd{s'})
    =\textstyle\norm{v}^2_\mu.
  \end{align}
\end{proof}

\subsection{Proof of Lemma~\ref{lemma: A NSD}}
\label{proof: A NSD}
\begin{proof}
  We prove the negative semi-definiteness of $A$ by showing $\indot{w}{Aw} \le 0$ for any $w\in\R[d] \setminus \qty{0}$.
  Firstly, we define $v_w(s) \doteq x(s)^\top w$ for all $s \in \fS$.
  In addition, $v_w \in L_2(\fS, \mu)$ by Assumption~\ref{assumption: feature function bounded}.
  Then, we have
  \begin{align}
    \indot{w}{Aw} 
    =& \textstyle\indot{v_w}{\gamma P_\pi v_w - v_w} \explain{By~\eqref{eq: A definition}}_\mu\\
    =& \textstyle\indot{v_w}{\gamma P_\pi v_w}_\mu - \norm{v_w}^2_\mu
    \le \textstyle\gamma \norm{v_w}_\mu \norm{P_\pi v_w}_\mu - \norm{v_w}^2_\mu\\
    \le& \textstyle(\gamma - 1)\norm{v_w}^2_\mu \explain{Lemma~\ref{lemma: pv norm}}
    \le \textstyle 0.
  \end{align}
\end{proof}

\section{Proofs in Section~\ref{section: TD fixed points}}
\subsection{Proof of Lemma~\ref{lemma: inner integral positive definite}}
\label{proof: inner integral positive definite}
\begin{proof}
  We first note that $v_w \in L_2(\fS, \mu)$ because $\norm{x}$ is bounded under Assumption~\ref{assumption: feature function bounded}.
  Then, we have $w^\top A w = \indot{v_w}{\gamma P_\pi v_w - v_w}_\mu$.
  We begin with the direction $w^\top A w = 0 \impliedby v_w = 0$ a.e.
  When $v_w = 0$ a.e., we have $w^\top A w = \indot{v_w}{\gamma P_\pi v_w - v_w}_\mu =  0$ trivially holds.
  We then prove $w^\top A w = 0 \implies v_w = 0$ a.e. via contradiction.
  First, we have $\indot{v_w}{\gamma P_\pi v_w - v_w}_\mu = 0$ because $w^\top A w = 0$.
  Now, suppose $\norm{v_w}_\mu > 0$.
  We have
  \begin{align}
    \textstyle \indot{v_w}{\gamma P_\pi v_w - v_w}_\mu
    =& \textstyle \indot{v_w}{\gamma P_\pi v_w}_\mu - \norm{v_w}^2_\mu
    \le \textstyle \gamma \norm{v_w}_\mu \norm{P_\pi v_w}_\mu - \norm{v_w}^2_\mu\\
    \le& \textstyle (\gamma - 1)\norm{v_w}^2_\mu \explain{Lemma~\ref{lemma: pv norm}}\\
    <& \textstyle 0 \explain{$\norm{v_w}_\mu > 0$}.
  \end{align}
  This generates a contradiction and completes the proof.
  Thus, we must have $\norm{v_w}_\mu = 0$, which holds if and only if $v_w = 0$ a.e.
\end{proof}

\subsection{Proof of Lemma~\ref{lemma: W nonempty}}
\label{proof: W nonempty}
\begin{proof}
Recall that we can represent the vector-valued feature mapping $x$ as
$x(s) = \mqty[x_1(s) & x_2(s) & \cdots & x_d(s)]^\top$, where each $x_i: \fS \to \R$ is a basis function 
for $i = 1,2,\dots,d$.
Without further assumptions on $x$, the basis functions can be linearly dependent.
Without loss of generality, suppose the first $m$ basis functions form the largest linearly independent collection of $x$.
When $m = 0$, it implies $x = 0$ a.e., and we end up with a degenerate case, where $A = 0$ and $b = 0$.
In this case, $\fW_* = \R[d]$ and clearly is non-empty.
In the rest of the proof, we analyze the case where $m \ge 1$.
We denote $\phi(s) \doteq \mqty[x_1(s) & x_2(s) & \cdots & x_m(s)]^\top \in \R[m]$.
Then, for each $x_k$, where $k \in \qty{m+1, \dots, d}$, there exists a vector of coefficients $c \doteq \mqty[c_1 & c_2 \cdots & c_m]$, such that $x_k = \sum_{i=1}^m c_i x_i$ a.e. 
Therefore, we can define $\hat{x}(s) \doteq \mqty[\phi(s) \\ C \phi(s)]$, where $C \in \R[(d-m) \times m]$ is a coefficient matrix, such that $\hat{x} = x$ a.e.
Define
$\hat{A} \doteq \int_{\fS} \hat{x}(s)\qty(\gamma (P_\pi \hat{x})(s)^\top - \hat{x}(s)^\top) \mu(\dd{s})$
and
$\hat{b} \doteq \int_{\fS} \hat{x}(s)r(s)\mu(\dd{s})$.
We have
\begin{align}
  \hat{A} - A
  =&\textstyle\int_{\fS}(\hat{x}(s) - x(s) + x(s))\qty(\gamma(P_\pi \hat{x})(s)^\top - \hat{x}(s)^\top) \mu(\dd{s}) - A\\
  =&\textstyle\int_{\fS}(\hat{x}(s) - x(s))\qty(\gamma(P_\pi \hat{x})(s)^\top - \hat{x}(s)^\top) \mu(\dd{s})\\
  +& x(s)\qty(\gamma(P_\pi \hat{x})(s)^\top - \hat{x}(s)^\top) \mu(\dd{s}) - A \\
  =&\textstyle\int_\fS \gamma x(s)(P_\pi \hat{x})(s)^\top \mu(\dd{s}) - \int_\fS x(s)\hat{x}(s)^\top \mu(\dd{s}) - A \explain{$\hat x = x$ a.e.} \\
  =&\textstyle\int_\fS \gamma x(s)(P_\pi \hat{x})(s)^\top \mu(\dd{s}) - \int_\fS x(s)\hat{x}(s)^\top \mu(\dd{s})\\
  -& \textstyle\qty(\int_\fS \gamma x(s)(P_\pi x)(s)^\top \mu(\dd{s}) - \int_\fS x(s)x(s)^\top \mu(\dd{s}))\\
  =&\textstyle\int_\fS \gamma x(s)(P_\pi (\hat{x} - x))(s)^\top \mu(\dd{s}) - \int_\fS x(s)(\hat{x} - x)(s)^\top \mu(\dd{s})\\
  =&\textstyle\int_\fS \gamma x(s)(P_\pi (\hat{x} - x))(s)^\top \mu(\dd{s}) \explain{$\hat{x} - x = 0$ a.e.}
\end{align}
By Assumption~\ref{assumption: transition kernel map zero to zero}, it holds that $P_\pi(\hat{x} - x) = 0$ a.e.
Consequently, we have $\hat{A} - A = 0$.
Similarly, we have
$\hat{b} - b = \int_{\fS} (\hat{x} - x)(s)r(s)\mu(\dd{s}) = 0$.
Thus, it holds that $\hat{A} = A$ and $\hat{b} = b$.

We now prove that $\fW_*$ is non-empty.
Let $w = \mqty[w_1 \\ w_2]$, where $w_1 \in \R[m]$ and $w_2 \in \R[d-m]$.
We have
$Aw + b =
  \int_\fS \mqty[\phi(s) \\ C \phi(s)]
  \mqty[\gamma(P_\pi \phi)(s) - \phi(s)\\
        C \qty(\gamma(P_\pi \phi)(s) - \phi(s))]^\top \mu(\dd{s})
  \mqty[w_1 \\ w_2]
  + 
  \int_{\fS} \mqty[\phi(s) \\ C \phi(s)] r_\pi(s)\mu(\dd{s})$.
Define $\delta(s) \doteq \gamma(P_\pi \phi)(s) - \phi(s)$.
We then have
\begin{align}
  A w + b
  =&
  \mqty[\int_\fS \phi(s)\delta(s)^\top\mu(\dd{s}) &
        \int_\fS \phi(s)\delta(s)^\top\mu(\dd{s}) C^\top\\
        C \int_\fS  \phi(s)\delta(s)^\top \mu(\dd{s}) &
        C \int_\fS  \phi(s)\delta(s)^\top \mu(\dd{s}) C^\top]
  \mqty[w_1 \\ w_2]
  + 
  \mqty[\int_\fS \phi(s)r_\pi(s)\mu(\dd{s})\\
        C \int_\fS \phi(s)r_\pi(s)\mu(\dd{s})]\\
  =&
  \mqty[A' & A' C^\top\\ CA' & CA'C^\top]
  \mqty[w_1 \\ w_2]
  + 
  \mqty[b' \\ C b'],
\end{align}
where we use shorthands $A' \doteq \int_\fS \phi(s)\delta(s)^\top\mu(\dd{s})$
and $b' \doteq \int_\fS \phi(s)r_\pi(s)\mu(\dd{s})$.
Thus, it leaves us with two linear equations
\begin{align}
  \label{eq: full linear equation}
  \begin{cases}
    A'w_1 + A'C^\top w_2 + b'&= 0\\
    CA'w_1 + CA'C^\top w_2 + C b'&= 0
  \end{cases}
\end{align}
which we can solve to obtain some $w_1, w_2$, such that $A\mqty[w_1\\w_2] + b = 0$.
Let $\ker(M)$ denote the kernel space of a matrix $M$.
We choose $w_2 \in \ker(A'C^\top)$, such that~\eqref{eq: full linear equation} reduces to
\begin{align}
  \label{eq: reduced linear equation}
  \begin{cases}
    A' w_1 &= -b'\\
    CA'w_1 &= -Cb'.
  \end{cases}
\end{align}
We then observe that $w_1 = -A'^{-1}b'$ is a valid solution to the first equation of~\eqref{eq: reduced linear equation} provided $A'$ has full rank.
Furthermore, this $w_1$ would also satisfy the second equation of~\eqref{eq: reduced linear equation}, thus solving the system of linear equations.
We now prove that $A'$ is indeed invertible by showing it is negative definite.
The proof mirrors the proof of Lemma~\ref{lemma: A NSD}.
We first define $v_w'(s) \doteq \phi(s)^\top w$ for all $s \in \fS$ and any $w \in \R[m]\setminus\qty{0}$, which is square-integrable under Assumption~\ref{assumption: feature function bounded}.
Recall that the basis functions $x_1, x_2, \dots, x_m$ that constitute $\phi$ are linearly independent.
Hence, it is impossible that $v'_w = 0$ a.e.
We then have
\begin{align}
    \indot{w}{A'w} 
    =& \textstyle\indot{v'_w}{\gamma P_\pi v'_w - v'_w}_\mu
    = \textstyle\indot{v'_w}{\gamma P_\pi v'_w}_\mu - \norm{v'_w}^2_\mu
    \le \textstyle\gamma \norm{v'_w}_\mu \norm{P_\pi v'_w}_\mu - \norm{v'_w}^2_\mu\\
    \le& \textstyle(\gamma - 1)\norm{v'_w}^2_\mu \explain{Lemma~\ref{lemma: pv norm}}
    < \textstyle0.
\end{align}
The matrix $A'$ is negative definite and thus invertible.
As a result, we have constructed a solution set 
$\qty{w = \mqty[w_1 \\ w_2] \Bigg| 
  w_1 = - A'^{-1} b'; w_2 \in \ker(A'C^\top)} 
  \subseteq \fW_*$
and proved $\fW_*$ is non-empty when $m \ge 1$.
\end{proof}

\subsection{Proof of Lemma~\ref{lemma: v = v' a.e. iff w' in W}}
\label{proof: v = v' a.e. iff w' in W}
\begin{proof}
  We first recall that $v_{w - w'} \doteq x(s)^\top (w - w')$.
  We begin with $w' \in \fW_* \implies v_w = v_{w'}$ a.e.
  Suppose that $w' \in \fW_*$.
  We have
  \begin{align}
    \textstyle Aw + b - (Aw' + b) =& 0\\
    \textstyle A(w - w') =& 0\\
    \textstyle (w - w')^\top A (w - w') =& 0.
  \end{align} 
  By Lemma~\ref{lemma: inner integral positive definite},  we have $v_{w - w'} = 0$ a.e., implying $v_w = v_{w'}$ a.e.

  We now show $v_w = v_{w'}\quad\text{a.e.} \implies w'\in\fW_*$.
  Suppose $v_w = v_{w'}$ a.e. holds.
  We then have $v_{w - w'} = 0$ a.e.
  Define $\abs{v_{w - w'}}(s) \doteq \abs{x(s)^\top(w - w')}$.
  Then, it holds trivially that $\abs{v_{w - w'}} = 0$ a.e.
  We have
  \begin{align}
    \textstyle \norm{A(w - w')}
    =& \textstyle \norm{\int_{\fS} x(s)\qty(\gamma (P_\pi x)(s)^\top - x(s)^\top) \mu(\dd{s}) (w - w')}\\
    =& \textstyle \norm{\int_{\fS} x(s)\qty(\gamma (P_\pi v_{w - w'})(s)- v_{w - w'}(s)) \mu(\dd{s})}\\
    \le& \textstyle \int_{\fS} \norm{x(s)\qty(\gamma (P_\pi v_{w - w'})(s)- v_{w - w'}(s))} \mu(\dd{s}) \explain{Jensen's inequality}\\
    =& \textstyle \int_{\fS} \norm{x(s)}\abs{\qty(\gamma (P_\pi v_{w - w'})(s)- v_{w - w'}(s))} \mu(\dd{s}) \\
    \le& \textstyle \int_{\fS} \norm{x(s)}\gamma \abs{(P_\pi v_{w - w'})(s)}\mu(\dd{s})
    + \int_{\fS} \norm{x(s)} \abs{v_{w - w'}}(s) \mu(\dd{s})\\
    \le& \textstyle C_x\int_{\fS}\gamma \abs{(P_\pi v_{w - w'})(s)}\mu(\dd{s})
    + C_x \int_{\fS} \abs{v_{w - w'}}(s) \mu(\dd{s}) 
    \explain{$C_x \doteq \sup_{s\in\fS} \norm{x(s)} < \infty$ by Assumption~\ref{assumption: feature function bounded}}\\
    =& \textstyle C_x\int_{\fS}\gamma \abs{(P_\pi v_{w - w'})(s)}\mu(\dd{s}) \explain{$\abs{v_{w - w'}} = 0$ a.e. w.r.t. $\mu$ }\\
    =& 0. \explain{Assumption~\ref{assumption: transition kernel map zero to zero}}
  \end{align}
  The result suggests $A(w - w') = 0$, which implies $Aw = Aw'$.
  We therefore have $Aw' + b = Aw + b = 0$ and have proved that $w' \in \fW_*$.
\end{proof}

\section{Proofs in Section~\ref{section: ode analysis}}  
\begin{xlemma}
  \label{lemma: w(t) bounded}
  Under Assumptions~\ref{assumption: ergodicity} and~\ref{assumption: feature function bounded},
  it holds that $\sup_{t \in [0, \infty)} \norm{w(t; w_0)} < \infty$
  for all $w_0 \in \R[d]$.
\end{xlemma}
\begin{proof}
  Fix an arbitrary $w_* \in \fW_*$.
  We first show that $\dv{\norm{w(t; w_0) - w_*}^2}{t} \le 0$ for any $w_0 \in \R[d]$.
  \begin{align}
    \textstyle \dv{\norm{w(t; w_0) - w_*}^2}{t}
    &= \textstyle 2(w(t; w_0) - w_*)^\top (A w(t; w_0) + b)\\
    &= \textstyle 2(w(t; w_0) - w_*)^\top (A w(t; w_0) + b - (A w_* + b)) \explain{$A w_* + b = 0$}\\
    &= \textstyle 2(w(t; w_0) - w_*)^\top A (w(t; w_0) - w_*)
    \le \textstyle 0. \explain{Lemma~\ref{lemma: A NSD}}
  \end{align}
  Thus, $\norm{w(t; w_0) - w_*}$ is monotonically decreasing given any $w_0 \in \R[d]$.
  Hence, \\$\norm{w(t; w_0) - w_*} \le \norm{w_0 - w_*}$ for $t \ge 0$, and by triangle inequality we get\\
  $\sup_{t \in [0, \infty)}\norm{w(t; w_0)} \le \norm{w_*} + \norm{w_0 - w_*}$,
  which completes the proof.
\end{proof}

\subsection{Proof of Lemma~\ref{lemma: convergence re(lambda) negative}}
\label{proof: convergence re(lambda) negative}
\begin{proof}
  The absolute homogeneity of a matrix norm implies that
  \begin{align}
      &\textstyle\lim_{t \to \infty} \norm{\exp(\Re(\lambda_i) t) \qty(\cos(\Im(\lambda_i) t) + \mi \sin(\Im(\lambda_i) t))
      \sum_{n=0}^{y_{i,j}-1} \frac{1}{n!} t^n N_{i,j}^n}\\
      =& \textstyle\lim_{t \to \infty} \exp(\Re(\lambda_i) t) \abs{\cos(\Im(\lambda_i) t) + \mi \sin(\Im(\lambda_i) t)}
      \norm{\sum_{n=0}^{y_{i,j}-1} \frac{1}{n!} t^n N_{i,j}^n}\\
      =& \textstyle\lim_{t \to \infty} \exp(\Re(\lambda_i) t)  \norm{\sum_{n=0}^{y_{i,j}-1} \frac{1}{n!} t^n N_{i,j}^n}.
  \end{align}
  It appears that
  $\norm{\sum_{n=0}^{y_{i,j}-1} \frac{1}{n!} t^n N_{i,j}^n} = \mathcal{O} (t^{y_{i,j} - 1})$.
  When $\Re(\lambda_i) < 0$, we have
  $\lim_{t \to \infty} \\ \exp(\Re(\lambda_i) t)
  \norm{\sum_{n=0}^{y_{i,j}-1} \frac{1}{n!} t^n N_{i,j}^n} = 0$
  because $\exp(\Re(\lambda_i) t)$ decays exponentially,
  whereas $\norm{\sum_{n=0}^{y_{i,j}-1} \frac{1}{n!} t^n N_{i,j}^n}$ exhibits polynomial growth.
\end{proof}

\subsection{Proof of Lemma~\ref{lemma: exp^Bt bounded}}
\label{proof of e^Bt bounded}
\begin{proof}
  We shall prove this claim by propagating the boundedness of $w(t; w_0)$ through different levels, starting from $z(t; z_0)$.
  For any $z_0 \in \R[d], w_* \in \fW_*$, pick $w_0 = z_0 + w_*$. 
  From~\eqref{eq z eq w}, we obtain
  \begin{align}
      \textstyle\sup_{t \in [0, \infty)} \norm{z(t; z_0)}
      &= \textstyle\sup_{t \in [0, \infty)} \norm{w(t; z_0 + w_*) - w_*}\\
      &\le \textstyle\sup_{t \in [0, \infty)} \norm{w(t; z_0 + w_*)} + \norm{w_*}
      < \textstyle\infty \explain{Lemma~\ref{lemma: w(t) bounded}}.
  \end{align}
  Then, we prove that $\exp(At)$ is bounded.
  Let $e_n$ be the $n$-th column of $I_d$, 
  the identity matrix in $\R[d\times d]$.
  By the boundedness of $z(t; z_0)$ and~\eqref{eqn: closed form solution}, 
  we have for all $n \in \qty{1,2,\dots,d}$, $\sup_{t \in [0,\infty)} \norm{\exp(At) e_n} = \sup_{t \in [0, \infty)} \norm{z(t; e_n)} < \infty$.
  By the definition of the induced norm and the invariance of $\ell_2$ norm under matrix transpose, we obtain $\norm{\exp(At)} \leq \sqrt{d} \max_{n} \norm{\exp(At)e_n} = \sqrt{d} \max_n \norm{z(t;e_n)}$.
  Since $n$ is finite, we have
  $\sup_{t \in [0,\infty)} \norm{\exp(At) I_d} \\\leq \sup_{t\in [0, \infty)}\sqrt{d} \max_n \norm{z(t;e_n)} < \infty$.
  It then follows from the Jordan decomposition~\eqref{eqn: exp(At) Jordan normal form} that
  $\sup_{t \in [0, \infty)} \norm{\exp(Jt)} = \sup_{t \in [0, \infty)} \norm{P^{-1} \exp(At) P} \le \sup_{t \in [0, \infty)} \norm{\exp(At)} \norm{P^{-1}} \norm{P} < \infty$.
  Since $\exp(Jt)$ is a block diagonal matrix,
  we have
  $\max_{i,j} \norm{\exp(B_{i,j}t)} = \norm{\exp(Jt)} < \infty$,
  which completes the proof.
\end{proof}
\subsection{Proof of Corollary~\ref{corollary: dz/dt diminish}}
\label{proof: dz/dt diminish}
\begin{proof}
For any $z_0 \in \R[d], w_* \in \fW_*$, pick $w_0 = z_0 + w_*$.
From~\eqref{eq z eq w}, we obtain
$z(t; z_0) = w(t; z_0 + w_*) - w_*$.
By Theorem~\ref{thm: ode value convergence}, we have $\lim_{t\to\infty} v_{w(t; z_0 + w_*)} = v_{w_*}$ a.e.
Thus, 
we have $\lim_{t\to\infty} v_{z(t;z_0)} = \lim_{t\to\infty} v_{w(t; z_0 + w_*)} - v_{w_*} = 0$ a.e. 
Then, according to Corollary~\ref{corollary: vz equals 0 a.e. iff z in Z}, it holds that $\lim_{t\to\infty} d(z(t; z_0), \fZ_*) = 0$.
Since $\dv{z(t; z_0)}{t} = Az(t; z_0)$, we have $\lim_{t\to\infty}\dv{z(t; z_0)}{t} = 0$ by the definition of $\fZ_*$ and the fact that $z(t;z_0)$ converges to $\fZ_*$.
\end{proof}

\subsection{Proof of Lemma~\ref{lemma: convergence re(lambda) zero}}
\label{proof: convergence re(lambda) zero}
\begin{proof}
  Given $\Re(\lambda_i) = 0$, we have\\
  $\exp(B_{i,j} t) = \qty(\cos\qty(\Im(\lambda_i) t) + \mi \sin\qty(\Im(\lambda_i) t)) \sum_{n=0}^{y_{i,j}-1} \frac{1}{n!} t^n N_{i,j}^n$.
  We first prove $y_{i, j} = 1$ by contradiction.
  Suppose $y_{i,j} > 1$.
  If $\Im(\lambda_i) = 0$, 
  we then have
  \begin{align}
    \textstyle \lim_{t\to\infty}\norm{\exp(B_{i,j} t)} = \lim_{t\to\infty}\norm{\sum_{n=0}^{y_{i,j}-1} \frac{1}{n!} t^n N_{i,j}^n} = \infty,
  \end{align}
  yielding a contradiction with Lemma~\ref{lemma: exp^Bt bounded}.
  If $\Im(\lambda_i) \neq 0$, then $\exp(B_{i,j} t)$ oscillates with $t$.
  Consider the sequence $\qty{t_k: t_k = \frac{2k\pi}{\abs{\Im(\lambda_i)}}, k \ge 0}$.
  We have 
  \begin{align}
    \textstyle \lim_{k\to\infty}\norm{\exp(B_{i,j} t_k)} = \lim_{k\to\infty}\norm{\sum_{n=0}^{y_{i,j}-1} \frac{1}{n!} t_k^n N_{i,j}^n} = \infty,
  \end{align}
  again leading to a contradiction with Lemma~\ref{lemma: exp^Bt bounded}.
  We then conclude that $y_{i, j} = 1$ must hold, and 
  we are left with
  $\exp(B_{i,j} t) =\qty(\cos\qty(\Im(\lambda_i) t) + \mi \sin\qty(\Im(\lambda_i) t)) I_{\rho_{i,j}}$.
  We now prove $\Im(\lambda_i) = 0$ by contradiction.
  Assume that $\Im(\lambda_i) \neq 0$ holds.
  Then,
  \begin{align}
    \textstyle
    \label{eqn: d/dt goes to 0}
    \dv{\exp(B_{i,j}t)}{t} = \Im(\lambda_i) \qty(\mi \cos(\Im(\lambda_i) t) - \sin(\Im(\lambda_i) t)) I_{\rho_{i,j}}
  \end{align}
  and $\dv{\exp(B_{i,j}t)}{t}$ would oscillate and never converge.
  On the other hand, we have
  $\dv{z(t;z_0)}{t} = \dv{}{t} \exp(At) z_0 = P \dv{}{t}\exp(Jt)P^{-1} z_0$.
  We have showed in Corollary~\ref{corollary: dz/dt diminish} that $\lim_{t\to\infty} \dv{z(t;z_0)}{t} = 0$ for all $z_0 \in \R[d]$.
  Hence, by setting $z_0 = e_1, e_2, \dots, e_d$, where $e_n$ is the $n$-th column of $I_d$, it holds that
  \begin{align}
    \textstyle \lim_{t\to\infty} P \dv{}{t}\exp(Jt)P^{-1} I_d &= 0\\
    \textstyle P^{-1}\lim_{t\to\infty} P \dv{}{t}\exp(Jt)P^{-1} P &= 0\\
    \textstyle \lim_{t\to\infty} \dv{}{t}\exp(Jt) &= 0,
  \end{align}
  implying that $\lim_{t\to\infty} \dv{}{t} \exp(B_{i,j} t) = 0$,
  yielding a contradiction with~\eqref{eqn: d/dt goes to 0}.
  We then conclude that $\Im(\lambda_i) = 0$.
  We are finally left with $\forall t \ge 0$,
  $\exp(B_{i,j}t) = I_{\rho_{i,j}}$,
  which completes the proof.
\end{proof}

\subsection{Proof of Corollary~\ref{corollary: bounded invariant set}}
\label{proof: bounded invariant set}
\begin{proof}
  Let $\fW$ be a bounded invariant set of ODE~\eqref{eq linear td ode}.
  By definition, for every $w_0 \in \fW$, the solution $w(t; w_0)$ to ODE~\eqref{eq linear td ode} on the domain $(-\infty, \infty)$ would remain in $\fW$.
  Furthermore, since $\fW$ is also bounded, it implies that $w(t; w_0)$ is a bounded solution on $(-\infty, \infty)$.
  By Theorem~\ref{theorem: bounded solution constant}, it holds that $w(t;w_0)$ is constant and in $\fW_*$.
  Thus, it implies that $w_0\in \fW_*$,
  which completes the proof.
\end{proof}

\section{Proofs in Section~\ref{section: stochastic approximation}}
\label{sec sa proof}
We recall for analyzing~\eqref{eqn: linear td update},
we have
$H(w, y) \doteq \qty(r(s) + \gamma w^\top x(s') - w^\top x(s))x(s),\\
  h(w) \doteq \E_{y\sim\eta}\qty[H(w,y)]$.
Some elementary properties then follow.

\begin{xlemma}
  \label{lemma: H Lipschitz continuous}
  Let Assumption~\ref{assumption: feature function bounded} hold.
  Then there exists some $K_\tref{lemma: H Lipschitz continuous} > 0$, such that for any $w, w'\in\R[d]$ and $y \in \fY$, it holds that $\norm{H(w, y) - H(w',y)} \le K_\tref{lemma: H Lipschitz continuous}\norm{w - w'}$.
\end{xlemma}
\begin{proof}
  Given $w, w' \in \R[d]$, we have
  \begin{align}
    \textstyle\norm{H(w, y) - H(w', y)}
    =& \textstyle \norm{\qty(\gamma(w - w')^\top x(s') - (w - w')^\top x(s))x(s)}
    \explain{$Y \doteq \mqty[s & s']^\top$}\\
    =& \textstyle \abs{\gamma\indot{w - w'}{x(s')} - \indot{w - w'}{x(s)}}\norm{x(s)}\\
    \le& \textstyle C_x\qty(\abs{\gamma\indot{w - w'}{x(s')}} + \abs{\indot{w - w'}{x(s)}})
    \explain{$C_x \doteq \sup_{s\in\fS} \norm{x(s)} < \infty$}\\
    \le& \textstyle C_x\qty(\gamma C_x \norm{w-w'} + C_x\norm{w-w'})\\
    =& \textstyle (1 + \gamma)C_x^2\norm{w - w'}.
  \end{align}
  Setting $K_\tref{lemma: H Lipschitz continuous} = (1 + \gamma)C_x^2$, the lemma is proved.
\end{proof}

\begin{xlemma}
  \label{lemma: h Lipschitz continuous}
  Let Assumptions~\ref{assumption: ergodicity} and~\ref{assumption: feature function bounded} hold.
  Then there exists some $K_\tref{lemma: h Lipschitz continuous} > 0$, such that for any $w, w'\in\R[d]$, it holds that $\norm{h(w) - h(w')} \le K_\tref{lemma: h Lipschitz continuous}\norm{w - w'}$.
\end{xlemma}
\begin{proof}
  \begin{align}
    \textstyle \norm{h(w) - h(w')}
    =& \textstyle \norm{\E\textstyle _{Y \sim \eta}\qty[H(w, Y) - H(w', Y)]}\\
    \le& \textstyle \E_{Y \sim \eta}\qty[\norm{H(w, Y) - H(w', Y)}]
    \le \textstyle \E_{Y \sim \eta}\qty[K_\tref{lemma: H Lipschitz continuous}\norm{w-w'}]
    = \textstyle K_\tref{lemma: H Lipschitz continuous}\norm{w-w'}.
  \end{align}
  Hence, the inequality holds by setting $K_\tref{lemma: h Lipschitz continuous} = K_\tref{lemma: H Lipschitz continuous}$.
\end{proof}

\begin{xlemma}
  \label{lemma: H linear growth}
  Let Assumption~\ref{assumption: feature function bounded} hold.
  Then there exists a constant $K_\tref{lemma: H linear growth} > 0$, such that for all $y \in \fY$, it holds that $\norm{H(w, y)} \le K_\tref{lemma: H linear growth}(1 + \norm{w})$.
\end{xlemma}
\begin{proof}
  We have by Lemma~\ref{lemma: H Lipschitz continuous}
  $\norm{H(w, y) - H(w', y)} \le K_\tref{lemma: H Lipschitz continuous}\norm{w-w'}$.
  Then, fixing an arbitrary $\tilde{w} \in \R[d]$ with $\norm{\tilde{w}} = 1$, we have
  \begin{align}
    \textstyle \norm{H(w, y) - H(\tilde{w}, y)} 
    \le& \textstyle K_\tref{lemma: H Lipschitz continuous}\norm{w - \tilde{w}}\\
    \textstyle \norm{H(w, y)} - \norm{H(\tilde{w}, y)} 
    \le& \textstyle K_\tref{lemma: H Lipschitz continuous}\norm{w - \tilde{w}}\\
    \textstyle \norm{H(w, y)} 
    \le& \textstyle K_\tref{lemma: H Lipschitz continuous}\norm{w - \tilde{w}} + \norm{\qty(r(s) + \gamma \tilde{w}^\top x(s') - \tilde{w}^\top x(s))x(s)}\\
    \le& \textstyle K_\tref{lemma: H Lipschitz continuous}\qty(\norm{w} + \norm{\tilde{w}})
    + C_x \abs{r(s) + \gamma \tilde{w}^\top x(s') - \tilde{w}^\top x(s)}
    \explain{$C_x \doteq \sup_{s\in\fS} \norm{x(s)} < \infty$}\\
    \le& \textstyle K_\tref{lemma: H Lipschitz continuous}(\norm{w} + 1)
    C_x\qty(\abs{r(s)} + \gamma \abs{\tilde{w}^\top x(s')} + \abs{\tilde{w}^\top x(s)})\\
    \le& \textstyle K_\tref{lemma: H Lipschitz continuous}(\norm{w} + 1)
    C_x\qty(C_r + \gamma \norm{\tilde{w}}\norm{x(s')} + \norm{\tilde{w}} \norm{x(s)})
    \explain{$C_r \doteq \sup_{s\in\fS}\abs{r(s)} < \infty$}\\
    \le& \textstyle K_\tref{lemma: H Lipschitz continuous}(\norm{w} + 1)
    C_x\qty(C_r + \gamma C_x + C_x)\\
    =& \textstyle K_\tref{lemma: H Lipschitz continuous}\qty(C_x C_r + \gamma C_x^2 + C_x^2)
    (\norm{w} + 1).
  \end{align}
  Hence, setting $K_\tref{lemma: H linear growth} \doteq K_\tref{lemma: H Lipschitz continuous}\qty(C_x C_r + \gamma C_x^2 + C_x^2)$, we get $\norm{H(w,y)} \le K_\tref{lemma: H linear growth} (1 + \norm{w})$, which completes the proof.
\end{proof}

\subsection{Proof of Theorem~\ref{theorem: w_t a.s. bounded}}
\label{proof: w_t a.s. bounded}
\begin{proof}
  We shall show that Assumptions~\ref{assumption: learning rate sa} and~\ref{assumption: h bounded} -- \ref{assumption: bounded moment} hold.
  Invoking Theorem~\ref{theorem: w_t a.s. bounded conditions} will then complete the proof.
  First, Assumption~\ref{assumption: learning rate sa} is satisfied by Assumption~\ref{assumption: learning rate}.
  Assumption~\ref{assumption: h bounded} holds by Lemma~\ref{lemma: H linear growth}.
  We then proceed to show Assumption~\ref{assumption: g and nu exist} can be satisfied.
  Choosing $g = h$, where we recall that $h(w) \doteq \int_\fY H(w, y) \eta(\dd{y})$,
  Assumption~\ref{assumption: g and nu exist} (i) holds by Lemma~\ref{lemma: h Lipschitz continuous}.
  Assumption~\ref{assumption poisson} automatically satisfies $(ii)$, $(iii)$, and $(iv)$ of Assumption~\ref{assumption: g and nu exist}.
  Lastly, we prove Assumption~\ref{assumption: bounded moment} holds.
  Since $\fS$ is compact by Assumption~\ref{assumption: state space compact}, then $\fY$ is also compact.
  Thus, $\sup_{y\in\fY} \norm{y}$ is finite.
  We denote this quantity as $C_y$.
  Then, for all $a \in \R[d]$, $q > 0$ and $n \ge 0$, it holds that
  $\E\qty[1 + \norm{Y_{n+1}}^q \mid w_0 = a, Y_0 = y]
    \le 1 + C_y^q$.
  Thus, by setting $K_\tref{assumption: bounded moment}(q) \doteq (1 + C_y^q)$, Assumption~\ref{assumption: bounded moment} is satisfied.

  Having verified the assumptions, we now show the existence of the function $U$ required by Theorem~\ref{theorem: w_t a.s. bounded conditions}.
  Fixing an arbitrary $w_* \in \fW_*$, we define $U$ as
  $U(w) \doteq \norm{w - w_*}^2 + \norm{w_*}^2$.
  The second derivative of $U$ is $2 I_d$.
  Therefore, $U$ is of class $C^2$ with bounded second derivatives.
  We now show that it satisfies Condition (i) of Theorem~\ref{theorem: w_t a.s. bounded conditions}.
  Taking the gradient, we have
  \begin{align}
    \textstyle \innerdot{\nabla_w U(w)}{h(w)} 
    = \textstyle \innerdot{2(w - w_*)}{Aw + b}
    &= \textstyle 2\innerdot{(w - w_*)}{Aw + b - \qty(Aw_* + b)}\\
    &= \textstyle 2\innerdot{w - w_*}{A(w - w_*)}
    \le \textstyle 0 \explain{Lemma~\ref{lemma: A NSD}}.
  \end{align}
  Condition (i) holds for our selected $U$.
  Next, we prove that Condition (ii) also holds by showing that $U(w) \ge \frac{1}{2}\norm{w}^2$ for all $w \in \R[d]$.
  We have
  \begin{align}
    \textstyle U(w) - \frac{1}{2} \norm{w}^2 
    &= \textstyle \norm{w - w_*}^2 + \norm{w_*}^2 - \frac{1}{2}\norm{w}^2
    = \textstyle \frac{1}{2}\norm{w}^2 - 2\innerdot{w}{w_*} + 2 \norm{w_*}^2\\
    &= \textstyle \frac{1}{2}\qty(\norm{w}^2 - 4\innerdot{w}{w_*} + 4\norm{w_*}^2)
    = \textstyle \frac{1}{2} \norm{w - 2w_*}^2
    \ge \textstyle 0.
  \end{align}
  Hence, Condition (ii) holds by setting $c = \frac{1}{2}$.
  The proof is completed.
\end{proof}

\subsection{Proof of Theorem~\ref{thm: linear TD converge to invariant set}}
\label{proof: linear TD converge to invariant set}
\begin{proof}
  We apply Theorem~\ref{thm: shuze converge to bounded invariant set} to show that
  $\qty{w_t}$ converges almost surely to a possibly sample path dependent bounded invariant set of ODE~\eqref{eq linear td ode}.
  To be able to do so, we need to verify Assumptions \ref{assumption: learning rate sa}, \ref{assumption: Y_t ergodic}, \ref{assumption: L(y) existence} and \ref{assumption: w_t bounded} hold.
  Assumptions A.1 and A.2 are satisfied by Assumptions~\ref{assumption: learning rate} and~\ref{assumption: ergodicity}, respectively.
  Regarding Assumption~\ref{assumption: L(y) existence}, we have
  \begin{align}
    \textstyle \norm{H(w, y) - H(w', y)} 
    &=\textstyle \norm{\qty(\gamma \indot{x(s')}{w - w'} - \indot{x(s)}{w - w'})x(s)}\\
    &=\textstyle \abs{\indot{\gamma x(s') - x(s)}{w - w'}}\norm{x(s)}\\
    &\le \textstyle C_x \abs{\indot{\gamma x(s') - x(s)}{w - w'}} \explain{$C_x \doteq \sup_{s\in\fS} \norm{x(s)} < \infty$}\\
    &\le \textstyle C_x \norm{w - w'} \norm{\gamma x(s') - x(s)}\\
    &\le \textstyle C_x \norm{w - w'} \qty(\gamma \norm{x(s')} + \norm{x(s)})
    \le \textstyle C_x \norm{w - w'} \qty(C_x + \gamma C_x)\\
    &= \textstyle (1 + \gamma) C_x^2 \norm{w - w'}.
  \end{align}
  Thus, simply setting $L(y) = (1 + \gamma)C_x^2$ satisfies Assumption~\ref{assumption: L(y) existence}. 
  Lastly, Assumption~\ref{assumption: w_t bounded} is satisfied by Theorem~\ref{theorem: w_t a.s. bounded}.
  As a result, we have $\qty{w_t}$ converging almost surely to a bounded invariant set of ODE \eqref{eq linear td ode}.
  In light of Corollary~\ref{corollary: bounded invariant set}, any bounded invariant set is a subset of $\fW_*$.
  Hence, $\qty{w_t}$ converges almost surely to $\fW_*$.
  Lemma~\ref{lemma: v = v' a.e. iff w' in W} then completes the proof of Theorem~\ref{thm: linear TD converge to invariant set}.
\end{proof}
\subsection{Proof of Corollary~\ref{corollary: local stability}}
\label{proof: local stability}
\begin{proof}
  For any sample path $\qty{w_0, w_1, \dots}$,
  let $\bar{w}: \R \to \R[d]$
  be the piece-wise constant interpolation of $\qty{w_t}$,
  i.e.,
  \begin{align}
    \bar{w}(t) 
    \doteq
    \begin{cases}
        w_{m(0, t)} & t > 0\\
        0  &t \le 0.
    \end{cases}
  \end{align}
  Intuitively, on the positive real axis, $\bar{w}(t)$ is the piece-wise constant interpolation of $\qty{w_t}$,
  with each piece having a length $\qty{\alpha_t}$.
  We then define a helper function $\tau: \mathbb{N} \to \R$ as
  $\tau(k) \doteq \sum_{i=0}^k \alpha_k$.
  Next, we define a sequence of functions $\qty{f_i: \R \to \R[d]}$, where
  $f_i(t) \doteq \bar{w}\qty(\tau(i) + t)$.
  The boundedness of $\qty{w_t}$ follows from Theorem~\ref{theorem: w_t a.s. bounded}, implying that $\bar{w}(t)$ is also bounded, which in turn ensures that $\qty{f_i}$ is a sequence of bounded functions.
  Under Assumptions \ref{assumption: learning rate sa}, \ref{assumption: Y_t ergodic}, \ref{assumption: L(y) existence}, \ref{assumption: w_t bounded} (these assumptions have been verified in the proof of Theorem~\ref{thm: linear TD converge to invariant set}), and \ref{assumption lil}, Lemma~32 and the conclusion right above Lemma~34 of \cite{liu2025ode} proves there exists at least one convergent subsequence $\qty{f_{i_k}} \subseteq \qty{f_i}$, such that
  $\lim_{k\to\infty} f_{i_k} (t)
    = \lim_{k\to\infty} \bar{w}\qty(\tau(i_k) + t)
    = \hat{w}(t)$,
  where $\hat{w}(t)$ is a bounded solution to ODE~\eqref{eq linear td ode} on $\qty(-\infty, \infty)$.
  Additionally, Theorem~\ref{theorem: bounded solution constant} guarantees that $\hat{w}(t)$ is a constant solution and in $\fW_*$.
  Hence, we can conclude that for all $T < \infty$, we have
  $\lim_{k \to \infty} \bar{w}(\tau(i_k) + T) = w_*$, 
  where $w_* \in \mathcal{W}_*$.
  Since $\bar{w}(t)$ is merely a piece-wise constant interpolation of the weight sequence $\qty{w_t}$, the statement of this corollary immediately holds.
\end{proof}

\section{Proofs in Section~\ref{section: finite state space}}
\subsection{Proof of Lemma~\ref{lemma: projection matrix}}
\label{proof: projection matrix}
\begin{proof}
  We first note that
  $\Dnorm{Xw - v}^2 = \norm{D^{1/2}(Xw - v)}^2$.
  Then, by Theorem~\ref{thm: least squares}, it holds that
  $X \arg \min_{w \in \arg \min_w \Dnorm{Xw - v}^2} \norm{w}
    =X \arg \min_{w \in \arg \min_w \norm{D^{1/2}(Xw - v)}^2} \norm{w}
    =X(D^{1/2}X)^\dagger D^{1/2} v$.
  Hence, we have $\Pi = X(D^{1/2}X)^\dagger D^{1/2}$ by~\eqref{eq new projection}.
\end{proof}

\subsection{Proof of Lemma~\ref{lemma: contraction operator}}
\label{proof: contraction operator}
\begin{proof}
  Lemma 4 of \citet{tsitsiklis1997analysis} proves that $\bop$ is a contraction mapping w.r.t. $\Dnorm{\cdot}$ under Assumption~\ref{assumption: ergodicity}.
  Next, we show that $\Pi$ is nonexpansive w.r.t. $\Dnorm{\cdot}$.
  Define $Z \doteq D^{1/2} X$.
  Then, by Lemma~\ref{lemma: AA norm}, we have
  $\Dnorm{\Pi v} 
   =\Dnorm{X (D^{1/2} X)^\dagger D^{1/2} v}
   =\norm{Z Z^\dagger D^{1/2} v}
   \le\norm{Z Z^\dagger} \norm{D^{1/2} v}
   \le\norm{D^{1/2} v}
   =\Dnorm{v}$.
  It then follows immediately that $\bop \Pi$ is a contraction w.r.t. $\Dnorm{\cdot}$.
\end{proof}

\subsection{Proof of Lemma~\ref{lemma: value equivalence for TD fixed points}}
\label{proof: value equivalence for TD fixed points}
\begin{proof}
  Our proof relies on the fact that $D\qty(\gamma P - I)$ is negative definite~\citep{sutton2016emphatic}.
  For $w, w' \in \fW_*$, we have
  \begin{align}
    \textstyle A w + b - \qty(A w' + b) &= 0\\
    \textstyle A(w - w') &= 0 \\
    \textstyle X^\top D (\gamma P - I) X(w - w') &= 0\\
    \textstyle (w - w')^\top X^\top D (\gamma P - I) X(w - w') &= 0\\
    \textstyle X(w - w') &= 0 \explain{$D\qty(\gamma P - I)$ negative definite}\\
    \textstyle Xw & = Xw'.
  \end{align}
\end{proof}

\subsection{Proof of Theorem~\ref{thm: td fixed point and mspbe}}
\label{proof: td fixed point and mspbe}
\begin{proof}
  We begin with the direction $\Pi \bop X w = Xw \implies Aw + b = 0$.
  Define $\Omega \doteq \qty{w \mid\Pi \bop Xw = Xw}$.
  Suppose $w_* \in \Omega$, we have 
  \begin{align}
    \textstyle \Pi \bop X w_* &= X w_*\\
    \textstyle X (D^{1/2} X)^\dagger D^{1/2} \bop X w_* &= X w_*\\
    \textstyle D^{1/2} X (D^{1/2} X)^\dagger D^{1/2} \bop X w_* &= D^{1/2} X w_*\\
    \textstyle D^{1/2} X (D^{1/2} X)^\dagger D^{1/2} \bop X w_* &= 
    \textstyle D^{1/2} X (D^{1/2} X)^\dagger D^{1/2} X w_* \explain{$A A^\dagger A = A$}\\
    \textstyle D^{1/2} X (D^{1/2} X)^\dagger D^{1/2} (\bop X w_* - X w_*) &= 0\\
    \textstyle (D^{1/2} X)^\top D^{1/2} X (D^{1/2} X)^\dagger D^{1/2} (\bop X w_* - X w_*) &= 0\\
    \textstyle (D^{1/2} X)^\top D^{1/2} (\bop X w_* - X w_*) &= 0 \explain{$A^\top A A^\dagger = A^\top$}\\
    \textstyle X^\top D (r_\pi + \gamma P X w_* - X w_*) &= 0\\
    \textstyle X^\top D (\gamma P - I) X w_* + X^\top D r_\pi &= 0\\
    \textstyle Aw_* + b &= 0.
  \end{align}
  We now have $\Pi \bop X w = Xw \implies Aw + b = 0$.
  Next, we proceed to proving the other direction, i.e., $\Pi\bop X w = X w \impliedby A w + b = 0 $.
  In view of~\eqref{eq w star},
  there exists at least one $w_*$ such that $w_* \in \Omega$.
  The proof in the direction of $\implies$ then confirms that $w_* \in \fW_*$.
  Let $w$ be any weight in $\fW_*$.
  Then, Lemma~\ref{lemma: value equivalence for TD fixed points} implies that
  $Xw = Xw_* = v_*$.
  In view of~\eqref{eq v star},
  this means $w \in \Omega$.
  So, we have now proved that $w \in \fW_* \implies w \in \Omega$,
  which completes the proof.
\end{proof}
\bibliography{bibliography.bib}

\newpage

\end{document}